\PassOptionsToPackage{dvipsnames}{xcolor}
\documentclass[accepted]{uai2026} 

\usepackage[british]{babel}
\usepackage{natbib}
\usepackage{mathtools}
\usepackage{booktabs}
\usepackage{graphicx}
\usepackage{tikz}
\usetikzlibrary{arrows.meta,positioning}
\graphicspath{{../fig/}}
\usepackage{amsfonts}
\usepackage{amssymb}
\usepackage{amsthm}
\usepackage{xcolor}
\usepackage{url}

\title{Operationalising Relative Causal Knowledge: \\ Backbone Identifiability from Private Reports on a Shared Outcome}

\author[1]{Fabrizio~Russo}
\author[2]{Mark~Somers}
\affil[1]{Department of Computing, Imperial College London\\}
\affil[2]{%
    Fifty One Degrees Ltd\\
    
    London, UK\\

    \href{mailto:fabrizio@imperial.ac.uk}{fabrizio@imperial.ac.uk}\\
    \href{mailto:mark.somers@outlook.com}{mark.somers@outlook.com}
}

\newtheorem{theorem}{Theorem}
\newtheorem{proposition}[theorem]{Proposition}
\newtheorem*{theorem*}{Theorem}
\newtheorem*{proposition*}{Proposition}
\theoremstyle{definition}
\newtheorem{example}{Example}
\newtheorem{remark}[theorem]{Remark}

\newcommand{\E}{\mathbb{E}}
\newcommand{\Cov}{\operatorname{Cov}}
\newcommand{\Var}{\mathrm{Var}}

\newcommand{\indep}{\mathrel{\perp\!\!\!\perp}}

\begin{document}

\maketitle

\begin{abstract}
The \emph{Relativity of Causal Knowledge} (RCK) explains how a network of agents with different structural causal models can exchange causal knowledge through a shared interventionally consistent abstraction, or backbone. We ask the prior identification question that this transport mechanism presupposes: when is that backbone determined by the agents' private causal knowledge? In the basic two-agent common-effect case, two private causes influence one shared outcome and each agent identifies only the single-cause causal marginal relevant to its own perspective. We show that, under standard compatibility, non-degeneracy, and local overlap assumptions, those local causal marginals do not identify a unique backbone. Infinitely many joint intervention kernels can induce exactly the same private reports while disagreeing on joint interventions. We then give a conditional recovery result. Additive separability removes the hidden interaction degree of freedom, but observational residual summaries remain insufficient. Identification becomes possible when agents communicate causally identified response functions. An education value-added example illustrates why this is first a communication problem, and only then a policy-composition problem.
\end{abstract}

\section{Introduction}
Causal knowledge is useful partly because it travels. A randomised trial, a quasi-experimental design, or a credible structural model does not merely answer one isolated question; it can become evidence for later decisions, related domains, or other researchers' models. Standard structural causal models (SCMs) provide the language of interventions and counterfactuals within one model~\citep{pearl2009}. 

The \emph{Relativity of Causal Knowledge} (RCK)~\citep{dacunto2025} pushes this idea into a distributed setting: different agents may hold different local SCMs for the same world, and causal knowledge can be transported between them if their local perspectives admit a shared, interventionally consistent abstraction, called a \emph{backbone}.

That conditional is powerful, but it leaves an operational question open. In applications, the backbone is usually not available to the agents. It must be inferred from the knowledge they privately possess. If that inference is underdetermined, two agents can each report a valid causal finding and still fail to identify the shared abstraction through which those findings can be shared accurately. RCK is mathematically defined \emph{given} a backbone, but the choice of backbone becomes an identification problem.

\begin{figure}[t]
\centering
\includegraphics[width=\linewidth]{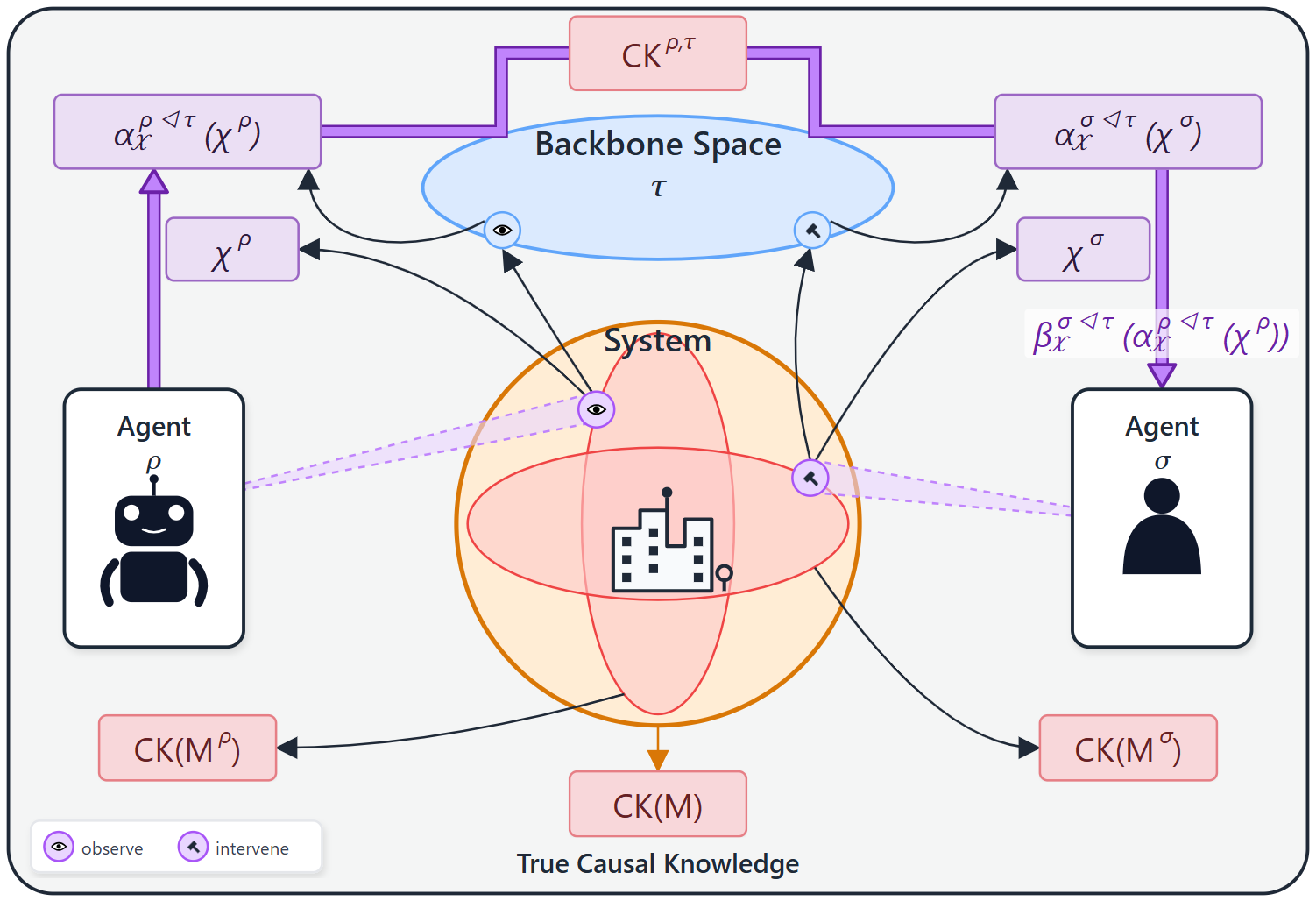}
\caption{RCK transport assumes a shared edge value in the backbone space $\tau$. The purple route shows local knowledge being restricted to the edge, aligned there, and extended into another agent's perspective; this paper asks when private reports identify that edge value in the first place.}\vspace{-0.5cm}
\label{fig:rck-transport}
\end{figure}
Figure~\ref{fig:rck-transport} separates the transport operation from the identification question studied here. Agent $\rho$ and Agent $\sigma$ are nodes in a network that can probe the same system, by intervention or just observation, from different local perspectives, producing causal-knowledge values $\chi^\rho\in CK(M^\rho)$ and $\chi^\sigma\in CK(M^\sigma)$. $CK(M)$ denotes the causal-knowledge object associated with a SCM $M$, which lives on the edge stalks of the network. If a common edge value $\chi^\tau\in CK(M^\tau)$ is available, RCK can restrict local knowledge to the backbone $\tau$ through \emph{restriction maps} ($\alpha_{\mathcal X}^{\rho\triangleleft\tau}$ and $\alpha_{\mathcal X}^{\sigma\triangleleft\tau}$), align it there, and extend it into another perspective through \emph{extension maps} ($\beta_{\mathcal X}^{\sigma\triangleleft\tau}$); more details in Appendix~\ref{app:rck-background}. Our question is the prior one: do the agents' private reports determine the backbone value $\chi^\tau$ that this restriction--alignment--extension pipeline needs? The following example instantiates the diagram with two local findings about the same outcome.

\begin{example}[Education value-added]\label{ex:education-motivation}
Consider a policy-maker asking whether better teachers can compensate for childhood neighbourhood disadvantage, e.g. in the context of the pupil premium funds in UK.\footnote{\url{https://www.gov.uk/government/publications/pupil-premium/pupil-premium}} Adult earnings $X$ are a natural shared outcome: the teacher value-added literature estimates long-run earnings effects of teacher quality~\citep{chetty2014teachers1,chetty2014teachers2}, while the neighbourhood-mobility literature estimates long-run earnings effects of childhood environment~\citep{chetty2016mto,chetty2018neighborhoods2}. The causal structure and effects of these findings is depicted in Figure~\ref{fig:education-common-effect}. Let Agent $\rho$ study teacher quality $Z_\rho$, and Agent $\sigma$ study neighbourhood environment $Z_\sigma$. Empirically, one year with a teacher one standard deviation higher in value-added is associated with roughly $1.3\%$ higher adult earnings, while one childhood year in a one-standard-deviation better county is associated with roughly $0.5\%$ higher adult earnings~\citep{chetty2014teachers2,chetty2018neighborhoods2}.

\begin{figure}[t]
\centering
\begin{tikzpicture}[>=stealth,scale=0.85,transform shape,font=\small]
\node[draw,rounded corners,fill=blue!8,minimum width=2.6cm,minimum height=0.85cm] (za) at (0,0) {teacher quality $Z_{\rho}$};
\node[draw,rounded corners,fill=green!8,minimum width=2.8cm,minimum height=0.85cm] (zb) at (4.2,0) {neighbourhood $Z_{\sigma}$};
\node[draw,rounded corners,fill=gray!10,minimum width=1.7cm,minimum height=0.85cm] (x) at (2.1,-1.95) {earnings $X$};
\draw[->,very thick,blue!70!black] (za) -- node[left=1pt,font=\scriptsize,align=center,text=blue!70!black] {\\$+1.3\%$ per year\\(value-added; Chetty et al., 2014a,b)} (x);
\draw[->,very thick,green!50!black] (zb) -- node[right=1pt,font=\scriptsize,align=center,text=green!40!black] {\\\\$+0.5\%$ per year\\(mobility; Chetty et al., 2016; \\Chetty and Hendren, 2018)} (x);
\node[draw=blue!60!black,rounded corners,fill=blue!4,font=\scriptsize,inner sep=2pt] at (-1.25,-0.45) {$\mathrm{do}(Z_{\rho}=z_{\rho})$};
\node[draw=green!40!black,rounded corners,fill=green!4,font=\scriptsize,inner sep=2pt] at (5.45,-0.45) {$\mathrm{do}(Z_{\sigma}=z_{\sigma})$};
\end{tikzpicture}
\caption{Two local causal findings about a shared outcome.}
\vspace{-0.3cm}
\label{fig:education-common-effect}
\end{figure}

Both local findings can be correct. In the notation of Figure~\ref{fig:rck-transport}, they are local pieces of $\chi^\rho$ and $\chi^\sigma$: the teacher study reports how $X$ changes under interventions on $Z_\rho$, and the neighbourhood study reports the analogous response for $Z_\sigma$. What they do not determine is the edge value $\chi^\tau$ in the backbone. 
The RCK consequence is that communication itself becomes ambiguous. To use Agent $\rho$'s teacher finding inside Agent $\sigma$'s perspective, the agents must know which backbone value it is being extended through; but several candidate $\chi^\tau$ values can agree with both local findings while disagreeing on a joint intervention $\mathrm{do}(Z_\rho=z_\rho,Z_\sigma=z_\sigma)$. Thus Agent $\rho$'s causal knowledge has no unique interventionally meaningful translation for Agent $\sigma$, and any compensatory policy calculation inherits that arbitrary choice.
\end{example}

Our contribution is to make this missing communication step precise. First, we formulate backbone recovery as an edge-level identification problem in the language of RCK. Second, for common-effect backbones, we prove a kernel-level non-identifiability theorem: under explicit non-degeneracy and local minorisation conditions, local causal marginals can leave infinitely many causally distinct backbones compatible with the same private reports, so the agents do not know which edge value they are aligning on. Third, we show how the obstruction can be removed under additive separability, provided that agents communicate causally identified response functions rather than observational summaries.
\vspace{-0.4cm}
\paragraph{Position in the Literature.}
The closest point of departure is RCK itself~\citep{dacunto2025}. RCK studies how causal knowledge can be transported across a \emph{network sheaf and cosheaf of causal knowledge} when local SCMs admit a common interventionally consistent abstraction. Related causal-abstraction work studies exact transformations, semantic embedding, and compositionality across levels of description~\citep{rubenstein2017consistency,beckers2019abstraction,rischel2021compositional,massidda2023soft,dacunto2025semantic}. We do not ask whether transport is well behaved once an abstraction is available, nor whether an abstraction can be learned between  specified models. Our question is earlier, about the target of causal abstractions.

The problem is also distinct from aggregating expert causal judgements outside the abstraction setting. \citet{bradley2014aggregating} study aggregation rules for causal-network judgements and associated probabilities over a common variable set. \citet{alrajeh2018experts} instead take experts' causal models as inputs and define compatibility and dominance conditions under which those models can be merged, while \citet{friedenberg2018focus} extend this programme to experts with different focus areas using a \emph{can-explain} relation. Our inputs and target are different. We do not assume access to full expert models, structural equations, or focus sets, and we are not trying to select a collective graph or merged SCM. We ask whether local causal reports determine the RCK backbone required by abstraction maps; the goal is the backbone identifiability rather than graph or model aggregation.

Finally, transportability, data fusion, and federated causal inference study how evidence from different environments or sources can be combined under explicit assumptions~\citep{pearl2014transportability,bareinboim2016datafusion,xiong2023federated,li2024fedcdh}. Causal marginal and latent-variable compatibility work similarly warns that partial causal views need not determine a unique joint source~\citep{tian2002testable,gresele2022}. We share that intuition, but the target differs: the unidentified object is not a pooled treatment effect, a merged graph, or a full joint SCM. It is the edge in the network sheaf and cosheaf through which RCK would allow sharing abstracted causal knowledge. 

We provide additional discussion and connections to the literature in Appendix~\ref{app:expanded-related}.
\section{Backbone Identification from Colliding Private Reports}
\vspace{-0.2cm}
Recall the two-agent scenario of Example~\ref{ex:education-motivation}. Agent $\rho$ has a private cause $Z_\rho$, Agent $\sigma$ has a private cause $Z_\sigma$, and both care about the same outcome $X$. This creates a common-effect, collider configuration $Z_\rho\to X\leftarrow Z_\sigma$: the backbone answers to joint interventions on both private causes. In this setting a mixed response coordinate can be hidden from both one-cause reports. 
In RCK, the backbone is the shared edge object on which local causal knowledge must agree before it can be transported. In this two-agent setting, that edge value is represented concretely by the interventionally consistent kernel
\begin{equation*}\label{eq:kernel}
Q(H\mid z_\rho,z_\sigma)
=
P\!\left(X\in H\mid \mathrm{do}(Z_\rho=z_\rho,Z_\sigma=z_\sigma)\right),\\[-0.2cm]
\end{equation*}
for measurable outcome events $H$. This joint-intervention kernel represents the edge value $\chi^\tau$. With $\chi^\tau$ fixed, restriction and extension maps have a common object through which to transport abstracted causal knowledge. 
%
If the other private cause is not part of the agent's local design, a candidate backbone $Q$ projects to single-cause reports by averaging over that hidden private cause:
\vspace{-0.3cm}
\begin{equation}\label{eq:private-reports}
\begin{aligned}
P_\rho(\cdot\mid z_\rho)
&:=
\int Q(\cdot\mid z_\rho,z_\sigma)\,P_{Z_\sigma}(dz_\sigma),\\
P_\sigma(\cdot\mid z_\sigma)
&:=
\int Q(\cdot\mid z_\rho,z_\sigma)\,P_{Z_\rho}(dz_\rho).
\end{aligned}
\end{equation}
These 
\emph{local report operators}
describe how private information hides the other agent's knowledge. The identification problem fixes the reported kernels $(P_\rho,P_\sigma)$ and asks whether they determine a unique, interventionally consistent $Q$. Thus Eq.~\eqref{eq:private-reports} should be read as a report-projection constraint: a candidate backbone $\bar Q$ is compatible if
\vspace{-0.3cm}
\[
\begin{aligned}
\int \bar Q(\cdot\mid z_\rho,z_\sigma)\,P_{Z_\sigma}(dz_\sigma)
&=P_\rho(\cdot\mid z_\rho),\\
\int \bar Q(\cdot\mid z_\rho,z_\sigma)\,P_{Z_\rho}(dz_\rho)
&=P_\sigma(\cdot\mid z_\sigma).
\end{aligned}
\]
Backbone identification requires all compatible candidates to agree on the intervention queries of interest. If two compatible kernels agree with both private reports but disagree on a joint intervention, then the edge value $\chi^\tau$ needed for RCK transport is not identified; Appendix~\ref{app:assumptions} gives the corresponding compatibility set and scope.

\vspace{0.1cm}
\noindent\textbf{Generic Non-Identifiability.}
We prove that such compatible kernels are generically not unique under weak assumptions: First, the two reports must be compatible with at least one candidate backbone $Q_0$; Second, each private-cause space must contain a nontrivial centred direction: bounded nonzero functions $u(Z_\rho)$ and $v(Z_\sigma)$ with mean zero under the private-cause laws $P_{Z_\rho}$ and $P_{Z_\sigma}$; Third, $Q_0$ must satisfy a local overlap requirement, formulated as a minorisation or small-set condition~\citep[Sec.~5.1]{meynTweedie1993}, so that a small signed perturbation can be added without leaving the space of probability kernels; see Appendix~\ref{app:assumptions}.

\begin{theorem}[Backbone non-identifiability under private knowledge]\label{thm:main}
Assume:
\begin{enumerate}
\item\label{ass:A1} there exists at least one compatible backbone kernel $Q_0$;
\item\label{ass:A2} there are measurable sets $A_\rho$ and $A_\sigma$ with positive private-cause probability, and bounded nonzero mean-zero functions $u(Z_\rho)$ and $v(Z_\sigma)$ supported on $A_\rho$ and $A_\sigma$, respectively;
\item\label{ass:A3} for some probability measure $\nu$ with $0<\nu(B)<1$ for some measurable $B$, and some $\varepsilon>0$,
\(
Q_0(H\mid z_\rho,z_\sigma)\geq \varepsilon\,\nu(H)
\)
for every measurable event $H$ and all $(z_\rho,z_\sigma)\in A_\rho\times A_\sigma$.
\end{enumerate}
Then there is a radius $\delta>0$ and a one-parameter family of distinct kernels $\{Q_t:t\in(-\delta,\delta)\}$ such that every $Q_t$ induces the same private reports $P_\rho$ and $P_\sigma$, but for $t\neq s$ the kernels $Q_t$ and $Q_s$ disagree on the joint interventional distribution for some intervention pair $(z_\rho,z_\sigma)$.
\end{theorem}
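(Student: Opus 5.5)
The plan is to perturb $Q_0$ along a product direction that is invisible to both report operators. Let $\nu$ and $B$ be as in assumption~\ref{ass:A3}. Define the signed measure $\mu(H):=\nu(H\cap B)-\nu(B)\,\nu(H)$. This measure has total mass $\mu(\mathcal X)=0$, so it does not change total probability. It is nonzero, because $\mu(B)=\nu(B)(1-\nu(B))>0$. It is also dominated by $\nu$: from $\mu(H)\le \nu(H\cap B)\le\nu(H)$ and $\mu(H)\ge-\nu(B)\nu(H)\ge-\nu(H)$ we get $|\mu(H)|\le\nu(H)$ for every measurable $H$. Take $u,v$ from assumption~\ref{ass:A2} and define
\[
Q_t(H\mid z_\rho,z_\sigma):=Q_0(H\mid z_\rho,z_\sigma)+t\,u(z_\rho)\,v(z_\sigma)\,\mu(H).
\]

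\textbf{Report invariance.} Integrating the perturbation against $P_{Z_\sigma}(dz_\sigma)$ gives $t\,u(z_\rho)\mu(H)\int v\,dP_{Z_\sigma}=0$, because $v$ has mean zero. Symmetrically, integrating against $P_{Z_\rho}$ kills the perturbation because $u$ has mean zero. Hence each $Q_t$ satisfies the report-projection constraints of Eq.~\eqref{eq:private-reports} exactly as $Q_0$ does, by assumption~\ref{ass:A1}.

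\textbf{$Q_t$ is a probability kernel.} Measurability in $(z_\rho,z_\sigma)$ holds because $u,v$ are measurable. For each fixed $(z_\rho,z_\sigma)$, $Q_t(\cdot\mid z_\rho,z_\sigma)$ is countably additive and has total mass $1$, since $\mu$ is a signed measure with $\mu(\mathcal X)=0$. It remains to show nonnegativity, and this is the only real obstacle.
\begin{itemize}
\item Outside $A_\rho\times A_\sigma$ the product $u(z_\rho)v(z_\sigma)$ vanishes, because $u$ and $v$ are supported on $A_\rho$ and $A_\sigma$. There $Q_t=Q_0\ge0$.
\item Inside $A_\rho\times A_\sigma$, let $M:=\sup|u|\,\sup|v|<\infty$. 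Assumption~\ref{ass:A3} gives
\[
Q_t(H\mid z_\rho,z_\sigma)\ge \varepsilon\nu(H)-|t|M\nu(H).
\]
This is nonnegative whenever $|t|\le\varepsilon/M$. Set $\delta:=\varepsilon/M$.
\end{itemize}
The minorisation is used precisely here: without a uniform lower bound on the region where the perturbation is active, no uniform step size $\delta$ would exist.

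\textbf{Distinctness.} Since $u$ and $v$ are nonzero, there are points $z_\rho^*$ and $z_\sigma^*$ with $u(z_\rho^*)v(z_\sigma^*)\neq0$. Then
\[
Q_t(B\mid z_\rho^*,z_\sigma^*)-Q_s(B\mid z_\rho^*,z_\sigma^*)=(t-s)\,u(z_\rho^*)\,v(z_\sigma^*)\,\mu(B)\neq0
\]
for $t\neq s$. So $Q_t$ and $Q_s$ disagree on the joint interventional distribution at the intervention pair $(z_\rho^*,z_\sigma^*)$, while inducing the same reports $(P_\rho,P_\sigma)$.

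Note that "nonzero" for $u,v$ must mean they take a nonzero value at some point; if nonzero only up to null sets, one still gets disagreement on a set of intervention pairs of positive $P_{Z_\rho}\otimes P_{Z_\sigma}$ probability.
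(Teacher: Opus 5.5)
Your proof is correct and follows the paper's argument essentially step for step. Your $\mu(H)=\nu(H\cap B)-\nu(B)\nu(H)$ is exactly the paper's $S(H)=\int_H(\mathbf 1_B-\nu(B))\,d\nu$, and the following all match the paper: the product perturbation $t\,u(z_\rho)v(z_\sigma)\mu$, the radius $\delta=\varepsilon/(\sup|u|\,\sup|v|)$, the inside/outside-rectangle nonnegativity check via minorisation, the centring argument for report invariance, and the distinctness witness. The differences are minor and harmless: you bound $|\mu(H)|\le\nu(H)$ directly rather than the total variation $|S|(H)$, and your pointwise suprema avoid any ambiguity about whether $\|\cdot\|_\infty$ is an essential supremum.
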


\begin{proof}[Proof sketch]
Start from one compatible kernel $Q_0$. The proof has three steps. First, choose a nontrivial signed perturbation $S$ on the outcome space with total mass zero, so adding it can change some events without breaking normalisation. Second, multiply this perturbation by centred functions $u(z_\rho)$ and $v(z_\sigma)$ on the two private-cause spaces. This makes the perturbation disappear whenever either agent averages over the other agent's private cause. Third, use the local minorisation condition to keep the perturbed laws non-negative for sufficiently small amplitudes. Define
\vspace{-0.1cm}
\begin{equation}\label{eq:perturbation}
Q_t(\cdot\mid z_\rho,z_\sigma)
=
Q_0(\cdot\mid z_\rho,z_\sigma)
+t\,u(z_\rho)v(z_\sigma)S(\cdot).\\[-0.1cm]
\end{equation}
The product $u(z_\rho)v(z_\sigma)$ is the hidden interaction direction: it is invisible to both one-cause reports because each report integrates out one centred factor, e.g.
\(
\int u(z_\rho)v(z_\sigma)P_{Z_\sigma}(dz_\sigma)=0
\)
and symmetrically for $Z_\rho$. But the same product is visible before marginalisation, and because $u$, $v$, and $S$ are nontrivial, two different amplitudes $t\neq s$ disagree for some joint intervention pair. Thus the same private reports are compatible with infinitely many causally distinct backbone candidates.
\end{proof}

All formal proofs are collected in Appendix~\ref{app:formal-proofs}.

In RCK terms, the failure occurs at the edge stalk level. The agents' private reports can agree with many candidate edge values (represented by kernels $Q_t$), so there is no unique $\chi^\tau$ through which restriction, alignment, and extension should proceed (more details in Appendix~\ref{app:rck-proof}). The following linear witness makes the communication failure concrete.
\begin{example}[Why naive communication fails in the education case]\label{ex:education-impossible}
A minimal witness is
\vspace{-0.1cm}
\[
X=\alpha Z_\rho+\beta Z_\sigma+\gamma Z_\rho Z_\sigma+\varepsilon,
\qquad
\E[Z_\rho]=\E[Z_\sigma]=0.\\[-0.1cm]
\]
The single-cause intervention responses identify $\alpha$ and $\beta$:
\(
\E[X\mid \mathrm{do}(Z_\rho=z_\rho)] = \alpha z_\rho,
\E[X\mid \mathrm{do}(Z_\sigma=z_\sigma)] = \beta z_\sigma
\), respectively.
They do not identify $\gamma$. Thus the same teacher and neighbourhood reports are compatible with compensatory effects ($\gamma<0$), additive effects ($\gamma=0$), or complementary effects ($\gamma>0$). The agents can exchange truthful local summaries, but those summaries do not specify a unique backbone value and therefore do not support unambiguous RCK communication; more details and other example witnesses are in Appendix~\ref{app:witnesses}. In policy terms, this leaves open whether better teachers help most in disadvantaged neighbourhoods, help equally everywhere, or are amplified by neighbourhood advantage.
\end{example}

\paragraph{A Conditional Recovery Result.}
The perturbation in Eq.~\eqref{eq:perturbation} is invisible locally because it is a mixed term in the two private causes. A natural way to rule out that ambiguity is additive separability:
\vspace{-0.2cm}
\begin{equation}\label{eq:separable}
X=f_\rho(Z_\rho)+f_\sigma(Z_\sigma)+\varepsilon.\\[-0.2cm]
\end{equation}
This assumption should not be read as a free modelling convenience. In the education example, it is motivated by the way the two literatures are usually parameterised and by the institutional separation between classroom instruction and neighbourhood or family environment~\citep{hanushek1979,rockoff2004impact,rivkin2005teachers}. It is also empirically revisable; interactions can exist and should be tested where joint data are available~\citep{jackson2014tracks}. The role of separability here is precise: it removes the hidden interaction coordinate that drives non-identifiability in Theorem~\ref{thm:main}.

\begin{proposition}[Additive separability removes the mixed response]\label{prop:separable}
If the shared outcome $X$ satisfies Eq.~\eqref{eq:separable} and $\E[\varepsilon]<\infty$, then the joint interventional mean response is
\vspace{-0.1cm}
\[
\begin{aligned}
m(z_\rho,z_\sigma)
&:=
\E[X\mid \mathrm{do}(Z_\rho=z_\rho,Z_\sigma=z_\sigma)]\\
&= 
f_\rho(z_\rho)+f_\sigma(z_\sigma)+\E[\varepsilon].\\[-0.1cm]
\end{aligned}
\]
In particular, the response contains no mixed term in $(z_\rho,z_\sigma)$.
\end{proposition}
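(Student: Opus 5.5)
The plan is to read Eq.~\eqref{eq:separable} as a structural equation in the SCM and apply the usual semantics of the do-operator. Under $\mathrm{do}(Z_\rho=z_\rho,Z_\sigma=z_\sigma)$, the mechanisms for $Z_\rho$ and $Z_\sigma$ are replaced by the constants $z_\rho$ and $z_\sigma$. The mechanism for $X$ and the law of the exogenous noise $\varepsilon$ are left untouched. The intervened outcome is therefore the random variable $X_{z_\rho,z_\sigma}=f_\rho(z_\rho)+f_\sigma(z_\sigma)+\varepsilon$.

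The one substantive point is that $\varepsilon$ keeps its original marginal law under the joint intervention. I will state this explicitly as part of the reading of Eq.~\eqref{eq:separable} as an SCM. The exogenous noise is not a descendant of $Z_\rho$ or $Z_\sigma$, so setting the private causes does not change its distribution. Confounding between $\varepsilon$ and the $Z$'s is harmless here, because the intervention severs the $Z$'s from their parents; it does not condition on them. This is exactly why the interventional mean differs from the observational regression $\E[X\mid Z_\rho,Z_\sigma]$. I expect this to be the only step needing care, and I would dispatch it by citing the standard truncated-factorisation or mechanism-replacement semantics~\citep{pearl2009}.

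Next I take expectations. The terms $f_\rho(z_\rho)$ and $f_\sigma(z_\sigma)$ are deterministic constants once $(z_\rho,z_\sigma)$ is fixed, so they are finite for each fixed pair. The noise $\varepsilon$ is integrable by hypothesis; I read the condition $\E[\varepsilon]<\infty$ as $\E|\varepsilon|<\infty$ so that the expectation is well defined. Linearity of expectation then gives $m(z_\rho,z_\sigma)=f_\rho(z_\rho)+f_\sigma(z_\sigma)+\E[\varepsilon]$.

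For the final claim, I make ``no mixed term'' precise as vanishing of the mixed difference. For any $z_\rho,z_\rho',z_\sigma,z_\sigma'$,
\[
m(z_\rho,z_\sigma)-m(z_\rho',z_\sigma)-m(z_\rho,z_\sigma')+m(z_\rho',z_\sigma')=0.
\]
This follows immediately from the additive form, because each $f$ term and the constant $\E[\varepsilon]$ cancel pairwise.

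Equivalently, the function $m$ has no component of the product form $u(z_\rho)v(z_\sigma)$ used in Eq.~\eqref{eq:perturbation}. So at the level of mean responses, the interaction direction exploited in Theorem~\ref{thm:main} is excluded.
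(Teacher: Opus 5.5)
Your proposal is correct and takes essentially the same route as the paper. You replace the mechanisms for $Z_\rho,Z_\sigma$ by constants, take the expectation of $f_\rho(z_\rho)+f_\sigma(z_\sigma)+\varepsilon$, and note that the additive surface leaves no room for the product direction $u(z_\rho)v(z_\sigma)$ of Eq.~\eqref{eq:perturbation}, while tightening three points the paper leaves implicit: that the law of $\varepsilon$ is unchanged by the joint intervention, that the moment hypothesis should be read as $\E|\varepsilon|<\infty$, and that ``no mixed term'' means a vanishing mixed difference.
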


Separability is the structural half of recovery, but it does not imply identifiability. Even with the structural assumption of Eq.~\eqref{eq:separable}, arbitrary observational summaries do not identify the additive backbone. For instance, after subtracting its own causal channel, Agent $\rho$ and Agent $\sigma$ have
\vspace{-0.1cm}
\[
\begin{aligned}
R_\rho&=X-f_\rho(Z_\rho)=f_\sigma(Z_\sigma)+\varepsilon,\\
R_\sigma&=X-f_\sigma(Z_\sigma)=f_\rho(Z_\rho)+\varepsilon.
\end{aligned}
\]
These are agent-relative residuals, not common backbone residuals: $R_\rho$ still bundles the other private response $f_\sigma(Z_\sigma)$ with the disturbance $\varepsilon$, while $R_\sigma$ bundles $f_\rho(Z_\rho)$ with $\varepsilon$. A residual variance alone therefore does not separate variation due to the other causal channel from variation due to $\varepsilon$; Appendix~\ref{app:obs-proof} provides a formal argument of this. 
\begin{remark}
    What resolves the ambiguity is causal communication. Once Agent $\rho$ communicates $f_\rho$ and Agent $\sigma$ communicates $f_\sigma$, the backbone mean response \(m(z_\rho,z_\sigma)=f_\rho(z_\rho)+f_\sigma(z_\sigma)\) is fixed for every joint intervention pair, up to the shared constant $\E[\varepsilon]$ when absolute outcome levels rather than contrasts are required.
\end{remark}
More detail and two specific cases are in Appendix~\ref{app:positive-special-cases}.

\begin{example}[Education value-added under causal communication]\label{ex:education-recovery}
Following~\citep{chetty2014teachers2,chetty2018neighborhoods2}, we specialise the additive response to a linear case and measure exposures in standard-deviation-year units:
\(
f_\rho(z_\rho)=1.3\,z_\rho,
f_\sigma(z_\sigma)=0.5\,z_\sigma,
\)
with effects in percentage points. A full-childhood neighbourhood disadvantage of roughly one standard deviation, assuming a full childhood comprises eighteen one-year exposure periods and applying the per-year exposure estimate from \cite{chetty2018neighborhoods2} linearly over that exposure window, corresponds to about
\(
\Delta_\sigma := 18\times 0.5\% \approx 9\%
\)
lower adult earnings. If $\bar z_\rho$ is the baseline teacher-quality exposure and $z_\rho^\ast$ is the compensating exposure, the offset solves
\vspace{-0.1cm}
\[
1.3\,(z_\rho^\ast-\bar z_\rho)
=
f_\rho(z_\rho^\ast)-f_\rho(\bar z_\rho)
=
\Delta_\sigma.\\[-0.1cm]
\]
The required cumulative increase is therefore approximately $\Delta_\sigma/1.3\approx 9/1.3\approx 7$ teacher-quality standard-deviation years. If spread evenly over a $13$-year school career, this is about $7/13\approx0.54$ standard deviations above baseline per year.
Within this linear additive special case, the communicated slopes suffice to identify the backbone response; if $Z_\rho\indep Z_\sigma$, $\varepsilon\indep(Z_\rho,Z_\sigma)$, and $\Var(X)$ is observed, the residual variance is recovered by
\(
\Var(\varepsilon)=\Var(X)-\Var\!\bigl(f_\rho(Z_\rho)\bigr)-\Var\!\bigl(f_\sigma(Z_\sigma)\bigr).
\)
This calculation is not licensed by the local marginals alone. It becomes meaningful only after separability rules out the hidden interaction regime and the agents communicate causally identified response functions. Under those conditions, the neighbourhood study identifies the size of the disadvantage gap, the teacher study identifies the compensating instructional dose, and the RCK backbone becomes an operational object enabling causal inference from private reports.
\end{example}
\vspace{-0.4cm}
\section{Conclusion}
\vspace{-0.2cm}
RCK is a theory of causal transport given a shared \emph{interventionally consistent} abstraction. This paper studies the identification problem that precedes such transport in private-knowledge common-effect settings. The main message is negative generically and positive conditionally. Private local causal marginals need not determine a unique edge-level backbone, and the ambiguity is causally meaningful because compatible backbones can disagree on joint interventions. Additive separability removes the interaction ambiguity, but agents must still communicate causal response objects rather than observational residual summaries.

This gives a narrower but actionable reading of RCK. The framework's transport maps are useful once the shared edge value is available. Our results clarify when private causal knowledge can supply that edge value, and when additional structure or communication is required. Natural next steps are to test separability in pooled studies, extend the result to larger agent networks, and identify weaker equivalence classes of backbones that preserve only the intervention queries needed for a given decision.

\acknowledgements
Russo was supported by the ERC under the ERC-POC programme (grant number 101189053).

\bibliography{../refs}

\newpage
\appendix

\section{SCM and RCK Background}\label{app:rck-background}
\vspace{-0.2cm}
An SCM specifies endogenous variables $V=(X_1,\dots,X_d)$, exogenous variables $U=(U_1,\dots,U_d)$, a directed acyclic graph $G$ over $V$, a joint distribution $P_U$, and structural assignments
\[
X_j := f_j(X_{\mathrm{Pa}(j)},U_j),
\qquad j=1,\dots,d.
\]
Interventions replace the structural assignments of intervened variables by constants, yielding interventional distributions such as $P(X\in H\mid \mathrm{do}(Z=z))$~\citep{pearl2009,peters2017elements}. Throughout, state spaces are measurable spaces and all events such as $H\subseteq\mathcal X$ are measurable.

Figure~\ref{fig:rck-transport} gives a compact view of the RCK objects used in the paper. Start with a network of agents $N=(\mathcal N,\mathcal E)$. Nodes $\rho,\sigma\in\mathcal N$ represent local perspectives on the same world: each agent has a local SCM and hence a local causal-knowledge space, written $CK(M^\rho)$ or $CK(M^\sigma)$. A node value such as $\chi^\rho$ is the agent's local section of causal knowledge. It may include observational facts, interventional responses, and any other causal quantities represented by the local SCM; these are local because they are generated from that agent's perspective, measurements, interventions, and modelling assumptions. Edges $\tau\in\mathcal E$ represent shared abstractions, or \emph{backbones}, through which adjacent agents may communicate. Thus the lower layer of the picture is the agent network, while the boxes attached to nodes and edges are the causal-knowledge spaces that sit on top of it.

RCK organises these spaces as a network sheaf and cosheaf of causal knowledge. The sheaf assigns a \emph{stalk} to each node and edge and assigns, to every node--edge incidence $\rho\triangleleft\tau$, a restriction map
\(
\alpha_{\mathcal X}^{\rho\triangleleft\tau}:CK(M^\rho)\to CK(M^\tau).
\)
In the figure, this is the step that takes Agent $\rho$'s local causal value $\chi^\rho$ and projects the part relevant to the shared outcome space $\mathcal X$ into the backbone $\tau$. The dual cosheaf carries information in the opposite direction by extension maps
\(
\beta_{\mathcal X}^{\sigma\triangleleft\tau}:CK(M^\tau)\to CK(M^\sigma),
\)
shown by the arrow that transports backbone-level content into Agent $\sigma$'s perspective.

The formal alignment condition is that the node values agree after restriction to the edge:
\[
\alpha_{\mathcal X}^{\rho\triangleleft\tau}(\chi^\rho)
=
\alpha_{\mathcal X}^{\sigma\triangleleft\tau}(\chi^\sigma)
=
\chi^\tau.
\]
This is the one-edge form of the global-section condition in RCK~\citep[Def. 10]{dacunto2025}: a compatible family of local values must induce a common value on every shared edge. Once such an edge value $\chi^\tau$ is fixed, the transported causal knowledge from $\rho$ to $\sigma$ is
\[
\chi^{\rho,\sigma}
=
\beta_{\mathcal X}^{\sigma\triangleleft\tau}
\!\left(
\alpha_{\mathcal X}^{\rho\triangleleft\tau}(\chi^\rho)
\right).
\]
We study the step before this transport operation. If private reports do not identify a unique edge value $\chi^\tau$, then restriction and extension maps may remain formally well defined for each candidate backbone, but communication through the edge is operationally ambiguous.

\section{Expanded Related Work and Connections}\label{app:expanded-related}

\paragraph{RCK, Abstraction, and Compositionality.}
RCK~\citep{dacunto2025} studies how causal knowledge can be transported across a \emph{network sheaf and cosheaf of causal knowledge} built from interventionally consistent backbones. Closely related abstraction formalisms include exact transformation and consistency across causal models~\citep{rubenstein2017consistency,beckers2019abstraction}, approximate or composable abstractions~\citep{rischel2021compositional,massidda2023soft}, and the semantic embedding principle~\citep{dacunto2025semantic}. Our focus is upstream of these guarantees. Rather than assuming a shared edge value and studying how abstraction maps behave, we ask when the target of these abstractions itself is identifiable from private causal reports.

The main conceptual point is that the ambiguity we identify does not primarily live in the existence of the edge $\tau$ as a graph relation within the RCK network, nor in the existence of the RCK morphisms as maps. It lives in the failure of private local causal knowledge to identify a unique edge value $\chi^\tau$ on the edge stalk. Extension then propagates that ambiguity rather than creating it. This distinguishes our setting from many abstraction frameworks, where the primary ambiguity lies in selecting or learning an abstraction map between already-given causal models. Here the maps are fixed once an edge object is fixed; what is missing is the edge object itself. This also points to a natural empirical escape hatch: if a later study or consortium can collect joint data on $(Z_\rho,Z_\sigma,X)$, then the additivity assumption from Proposition~\ref{prop:separable} becomes at least partly testable because an interaction term can be estimated or falsified directly rather than inferred from separate local reports.

\paragraph{Study Combination and Marginal Compatibility.}
Transportability and data-fusion theories ask when causal information from different environments or studies can be combined~\citep{pearl2014transportability,bareinboim2016datafusion}; related work studies causal invariance across environments~\citep{rojas2018invariant}. Federated causal inference and federated causal discovery ask analogous questions under privacy or decentralisation constraints~\citep{xiong2023federated,li2024fedcdh}. Causal marginal and latent-variable compatibility work similarly warns that partial causal views need not determine a unique joint source~\citep{tian2002testable,gresele2022}. The binary witness of \S\ref{app:witness-binary} is a causal version of the familiar fact that fixed row and column sums leave one interaction coordinate free in a $2\times 2$ table~\citep{agresti2013categorical}. More broadly, compatible marginals need not determine a unique joint distribution~\citep{nelsen2006copulas}. The structural causal marginal problem studies how much of a joint SCM can be recovered from overlapping causal marginals~\citep{gresele2022}. Our object is narrower: the unidentified object is the RCK edge value through which local causal knowledge would be transported.

\paragraph{Expert Judgement and Decision-Making.}
The problem is distinct from aggregating expert causal judgements. \citet{bradley2014aggregating} study aggregation rules for causal-network judgements and associated probabilities over a common variable set. \citet{alrajeh2018experts} take experts' SCMs as inputs and define compatibility and dominance conditions for merging, while \citet{friedenberg2018focus} extend this programme to experts with different focus areas. Our inputs are private causal reports, not full expert models. The decision-making examples in the main text show why this distinction matters: a decision-maker can need a joint response even when all available local causal findings are individually credible.

\paragraph{Beyond Standard Observational Equivalence.}
Linear Gaussian SCMs are known to be non-identifiable from observational data~\citep{peters2017elements}, but the standard language in that setting is Markov equivalence or edge-orientation ambiguity. Our ambiguity is different. The unresolved degree of freedom sits in the candidate backbone itself and can alter interventional predictions while leaving both local causal marginals unchanged. In that sense, the relevant equivalence class in our setting is not a standard Markov equivalence class but a coarser class where independence constraints are not consistent across different backbones.

\paragraph{ICA and Non-Gaussian Identification.}
Classical ICA shows that a linear mixture of independent non-Gaussian components is identifiable up to scaling and permutation~\citep{comon1994}. LiNGAM imports that idea into causal discovery: in a linear acyclic SEM with independent non-Gaussian disturbances and no latent confounding, the full causal structure becomes identifiable from observational data because the mixing matrix can be recovered and then aligned with a causal ordering~\citep{shimizu2006lingam}. The broader functional-model literature extends this logic beyond the linear non-Gaussian case to additive-noise and identifiable functional model classes~\citep{hoyer2008anm,peters2011ifmoc}. Our positive result is different in both target and mechanism. We do not identify a full DAG or mixing matrix from a single joint law, and we do not use non-Gaussianity of disturbances as the identifying lever. Instead, the object to be identified is an edge-level backbone, and identification comes from additive separability at the shared outcome together with communicated causally identified responses.

\paragraph{Abstraction, Semantic Embedding, and SEP.}
The abstraction literature provides the closest motivational parallel. In work on exact and approximate causal abstraction, the goal is to relate causal models across levels of description~\citep{beckers2019abstraction,rischel2021compositional,massidda2023soft}. The semantic embedding principle (SEP)~\citep{dacunto2025semantic} is especially relevant as a contrast case: SEP-style abstraction learning asks which abstraction map $\alpha_{\mathcal X}$ pushes a low-level model onto a higher-level one and admits a measurable right inverse. In such settings, non-identifiability often appears at the level of the \emph{abstraction morphism}: multiple maps may be compatible with the same observed low- and high-level distributions.

Our theorem isolates a complementary problem. The local-to-edge semantics are fixed by the problem: the agents observe only marginals of a shared but hidden edge-level object. Thus the primary ambiguity is not whether restriction or extension can be defined as maps, but whether private local reports identify a unique edge value $\chi^\tau$ on which local agreement should occur. In the two-node RCK setting, local agreement is
\[
\alpha_{\mathcal X}^{\rho\triangleleft\tau}(\chi^\rho)
=
\alpha_{\mathcal X}^{\sigma\triangleleft\tau}(\chi^\sigma)
=
\chi^\tau.
\]
Our claim is upstream of this condition: there may be several admissible candidates for $\chi^\tau$ that are all compatible with the same private reports, and hence several candidate local agreements with no data-driven way to select a unique one. Once one such value is fixed, extension proceeds in the ordinary RCK way:
\(
\chi^{\rho,\sigma}
=
\beta_{\mathcal X}^{\sigma\triangleleft\tau}(\chi^\tau).
\)
SEP is therefore relevant as a comparison, not as the main formal home of our theorem.

This distinction can be stated cleanly in pushforward language. In semantic embedding, one seeks a map $\phi$ such that
\(
\mu_h=\phi_\#\mu_\ell.
\)
In our setting, the coordinate projections are fixed:
\[
\mu_\rho=(\pi^\rho)_\#\mu,
\qquad
\mu_\sigma=(\pi^\sigma)_\#\mu,
\]
but the pair of marginal pushforwards does not determine the source measure $\mu$. Thus both theories concern what can and cannot be recovered from pushforward-compatible information, but at different levels. One underdetermines the map; the other underdetermines the underlying joint object. The shared lesson is that abstraction or communication from compressed causal summaries generically produces equivalence classes rather than unique solutions unless additional structure is supplied.

\paragraph{Compositionality.}
This also connects to recent work that frames identification itself compositionally~\citep{friend2025compositional}. Our theorem can be read as a negative compositionality result: valid local causal pieces do not automatically compose into a unique global causal object. Proposition~\ref{prop:separable} identifies the structural regime in which the hidden interaction term disappears, and the communication condition in Section~3 records when the remaining additive object becomes recoverable. In this sense, additive separability is the identifying form of compositionality used in the paper. It is not merely that the two private mechanisms are both causal; they must combine without an unidentified interaction term if their communicated causal response objects are to support a unique backbone.

\section{Assumptions and Scope}\label{app:assumptions}

For fixed private reports, the compatible backbone candidates are
\begin{equation}\label{eq:compat-main}
\begin{aligned}
\mathcal B_\tau(P_\rho,P_\sigma)
&:=
\left\{
\chi^\tau\in CK(M^\tau):
\right.\\[-0.2em]
&\hspace{3em}\left.
\pi^\rho(\chi^\tau)=P_\rho,\;
\pi^\sigma(\chi^\tau)=P_\sigma
\right\}.
\end{aligned}
\end{equation}
Backbone identification asks whether this set collapses to a unique interventionally consistent edge object.

The assumptions in Theorem~\ref{thm:main} separate existence, non-degeneracy, and overlap. Assumption~\ref{ass:A1} is a baseline compatibility requirement: before asking whether the backbone is unique, the two private reports must admit at least one common backbone candidate. Without it, the problem is inconsistency rather than non-identifiability.

Assumption~\ref{ass:A2} requires nontrivial centred directions in both private-cause spaces. This is the source of the hidden interaction direction. If one private-cause space has no nontrivial variation, or if the local report already conditions on all relevant causes, then the specific common-effect obstruction studied here disappears. The condition is weak in ordinary statistical settings: any non-degenerate real-valued private cause admits bounded centred functions on positive-probability sets.

Assumption~\ref{ass:A3} is a local overlap, or small-set, condition~\citep[Sec.~5.1]{meynTweedie1993}. It ensures that a small signed perturbation can be added to a compatible kernel without making probabilities negative. The condition is local rather than global: it only needs to hold on a positive-measure rectangle $A_\rho\times A_\sigma$. This accommodates kernels that are nearly deterministic outside that region and common continuous kernels whenever they have positive mass over a shared outcome set.

The theorem is therefore not a claim that every RCK edge is unidentified. It targets common-effect backbones where two private causes jointly affect a shared outcome and each report marginalises over the other cause. Sequential settings can compose through ordinary kernel composition, and pure common-cause overlaps create different identification problems. The positive result is likewise scoped: additive separability identifies the mean response surface once the response functions are communicated; identifying the full distributional kernel also requires the disturbance law or an equivalent distributional object.

\section{Proofs}\label{app:formal-proofs}

This appendix gives the full proof of Theorem~\ref{thm:main}, the proof of Proposition~\ref{prop:separable}, an observational-summary counterexample, and the RCK interpretation of Theorem~\ref{thm:main}.

\subsection{Proof of Theorem~\ref{thm:main}}\label{app:nonid-proof}
\begin{theorem*}[Theorem~\ref{thm:main}, restated]
Assume:
\begin{enumerate}
\item there exists at least one compatible backbone kernel $Q_0$;
\item there are measurable sets $A_\rho$ and $A_\sigma$ with positive private-cause probability, and bounded nonzero mean-zero functions $u(Z_\rho)$ and $v(Z_\sigma)$ supported on $A_\rho$ and $A_\sigma$, respectively;
\item for some probability measure $\nu$ with $0<\nu(B)<1$ for some measurable $B$, and some $\varepsilon>0$,
\(
Q_0(H\mid z_\rho,z_\sigma)\geq \varepsilon\,\nu(H)
\)
for every measurable event $H$ and all $(z_\rho,z_\sigma)\in A_\rho\times A_\sigma$.
\end{enumerate}
Then there is a radius $\delta>0$ and a one-parameter family of distinct kernels $\{Q_t:t\in(-\delta,\delta)\}$ such that every $Q_t$ induces the same private reports $P_\rho$ and $P_\sigma$, but for $t\neq s$ the kernels $Q_t$ and $Q_s$ disagree on the joint interventional distribution for some intervention pair $(z_\rho,z_\sigma)$.
\end{theorem*}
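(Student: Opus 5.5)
We follow the three-step construction of the sketch and make each step quantitative. The signed perturbation comes from the minorising measure $\nu$ and the set $B$ of Assumption~\ref{ass:A3}. Write $p:=\nu(B)\in(0,1)$ and set
\[
S(H):=\nu(H\cap B)-\frac{p}{1-p}\,\nu(H\cap B^c),
\]
so that $S(\mathcal X)=p-p=0$ and $S(B)=p\neq 0$; thus $S$ is a nontrivial signed measure of total mass zero. It also satisfies $|S(H)|\le c\,\nu(H)$ with $c:=\max\{1,p/(1-p)\}$. This domination by $\nu$ is why we build $S$ from $\nu$ rather than choosing it arbitrarily: it lets the minorisation bound absorb the perturbation.

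Next we define $Q_t$ by Eq.~\eqref{eq:perturbation}, using the centred bounded functions $u,v$ of Assumption~\ref{ass:A2}, and verify three properties in order.

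\textbf{(i) Measurability and normalisation.} For fixed $H$, the map $(z_\rho,z_\sigma)\mapsto Q_t(H\mid z_\rho,z_\sigma)$ is measurable, being a sum of measurable functions. Since $S(\mathcal X)=0$, each $Q_t(\cdot\mid z_\rho,z_\sigma)$ has total mass one. Countable additivity is inherited from $Q_0$ and $S$.

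\textbf{(ii) Non-negativity.} Off $A_\rho\times A_\sigma$ the product $u(z_\rho)v(z_\sigma)$ vanishes by the support condition, so $Q_t=Q_0\ge 0$ there. On the rectangle, Assumption~\ref{ass:A3} gives
\[
Q_t(H\mid z_\rho,z_\sigma)\ge \varepsilon\,\nu(H)-|t|\,\|u\|_\infty\|v\|_\infty\,c\,\nu(H),
\]
which is non-negative whenever $|t|<\delta:=\varepsilon/(c\|u\|_\infty\|v\|_\infty)$. The suprema are finite and positive because $u,v$ are bounded and nonzero.

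\textbf{(iii) Report invariance.} Integrating Eq.~\eqref{eq:perturbation} against $P_{Z_\sigma}$ gives
\[
P_\rho(\cdot\mid z_\rho)+t\,u(z_\rho)S(\cdot)\int v\,dP_{Z_\sigma}=P_\rho(\cdot\mid z_\rho),
\]
because $v$ is centred. The case of $P_\sigma$ is symmetric, using $\int u\,dP_{Z_\rho}=0$. Fubini applies since $u,v$ are bounded and $S$ is finite. Hence every $Q_t$ lies in the compatibility set $\mathcal B_\tau(P_\rho,P_\sigma)$.

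\textbf{(iv) Distinctness.} Take $t\ne s$. Then
\[
Q_t(B\mid z_\rho,z_\sigma)-Q_s(B\mid z_\rho,z_\sigma)=(t-s)\,u(z_\rho)v(z_\sigma)\,p.
\]
Since $u$ and $v$ are nonzero functions, there exist $z_\rho^\ast,z_\sigma^\ast$ with $u(z_\rho^\ast)\ne0$ and $v(z_\sigma^\ast)\ne0$. The two kernels therefore disagree on the event $B$ under $\mathrm{do}(Z_\rho=z_\rho^\ast,Z_\sigma=z_\sigma^\ast)$. In particular the map $t\mapsto Q_t$ is injective, so the family is infinite.

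The main obstacle is step (ii). A naive signed measure $S$ need not be dominated by $\nu$, and minorisation alone gives no upper bound on $Q_0$. Building $S$ from $\nu$ restricted to $B$ and $B^c$ resolves both issues: only a lower bound is ever needed, because the negative part of $S$ is controlled by $\varepsilon\nu$. A minor subtlety is to read ``nonzero'' for $u$ and $v$ as nonzero at some point, rather than merely nonzero almost surely, so that the witnessing intervention pair exists pointwise. If only the almost-sure version is assumed, the disagreement still holds on a set of intervention pairs of positive $P_{Z_\rho}\otimes P_{Z_\sigma}$ measure.
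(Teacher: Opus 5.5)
Your proposal is correct and follows the paper's proof essentially step by step. The steps that match are the signed perturbation dominated by $\nu$, the modulation by the centred product $u(z_\rho)v(z_\sigma)$, minorisation for nonnegativity on $A_\rho\times A_\sigma$, centring for report invariance, and a pointwise witness for distinctness. The only difference is cosmetic: your $S$ is the paper's $S(H)=\int_H(\mathbf 1_B-\nu(B))\,d\nu$ rescaled by $1/(1-\nu(B))$, so you pay the constant $c$ in $\delta$. The paper's normalisation instead gives $|S|(H)\le\nu(H)$ and hence $\delta=\varepsilon/(\|u\|_\infty\|v\|_\infty)$ directly.
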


The proof is organised around three tasks. First, we need a perturbation direction on the outcome space that changes some joint response but preserves total mass; that is the role of Proposition~\ref{prop:signed-perturbation-new}. Second, we need to lift that direction into a family of admissible backbone kernels that still reproduces the same private reports; that is the role of Proposition~\ref{prop:kernel-family-new}. Third, we need to show that different members of this family are not merely different parameterisations of the same object, but disagree on at least one joint intervention; that is the role of Proposition~\ref{prop:interventional-nonequivalence-new}. 
\begin{proposition}[Signed perturbation on the outcome space]\label{prop:signed-perturbation-new}
Let $\nu$ be a probability measure on $\mathcal X$. If there exists a measurable set $B\subseteq\mathcal X$ with $0<\nu(B)<1$, then there exists a bounded measurable function $s:\mathcal X\to\mathbb R$ such that
\begin{itemize}
\item[(i)] $\int s(x)\,\nu(dx)=0$;
\item[(ii)] $\|s\|_\infty\le 1$;
\item[(iii)] $s\not\equiv 0$ $\nu$-almost surely.
\end{itemize}
Consequently, the signed measure
\begin{equation}\label{eq:signed-measure-S}
S(H):=\int_H s(x)\,\nu(dx)
\end{equation}
has the following properties:
\begin{itemize}
\item[(a)] $S(\mathcal X)=0$;
\item[(b)] $S\neq 0$;
\item[(c)] $|S|(H)\le \nu(H)$ for every measurable set $H\subseteq\mathcal X$.
\end{itemize}
\end{proposition}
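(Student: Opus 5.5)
The construction is explicit. Set $p:=\nu(B)\in(0,1)$ and define
\[
s(x):=(1-p)\,\mathbf 1_B(x)-p\,\mathbf 1_{B^c}(x),
\]
which is the centred indicator $\mathbf 1_B-\nu(B)$. It is measurable because $B$ is, and it takes only the two values $1-p$ and $-p$.

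The three properties of $s$ then follow directly.
\begin{itemize}
\item[(i)] Integrating gives $\int s\,d\nu=(1-p)p-p(1-p)=0$.
\item[(ii)] Both $1-p$ and $p$ lie in $(0,1)$, so $\|s\|_\infty\le\max(p,1-p)\le 1$.
\item[(iii)] On $B$ we have $s=1-p>0$, and $\nu(B)=p>0$, so $s$ is nonzero on a set of positive $\nu$-measure.
\end{itemize}

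Next I would define $S(H):=\int_H s\,d\nu$. This is a finite signed measure because $s$ is bounded and $\nu$ is finite, and countable additivity follows from dominated convergence. Its properties are as follows.
\begin{itemize}
\item[(a)] Taking $H=\mathcal X$ and using (i) gives $S(\mathcal X)=0$.
\item[(b)] Evaluating at $H=B$ gives $S(B)=(1-p)p>0$, so $S\neq 0$.
\item[(c)] I would use the standard identity that the total variation of a signed measure with density $s$ with respect to $\nu$ has density $|s|$, that is, $|S|(H)=\int_H|s|\,d\nu$. This can be seen from the Hahn decomposition $\{s\ge0\},\{s<0\}$, which here is simply $B,B^c$. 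Combining this with $|s|\le1$ from (ii) yields $|S|(H)\le\nu(H)$.
\end{itemize}

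There is no real obstacle here. The only point needing care is (c), where one must justify the total-variation identity. Because $s$ has constant sign on $B$ and on $B^c$, this can be done by hand:
\[
|S|(H)=S^+(H)+S^-(H)=(1-p)\,\nu(H\cap B)+p\,\nu(H\cap B^c)\le\nu(H).
\]
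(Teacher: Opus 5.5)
Your proposal is correct and follows the paper's proof: the same centred indicator $s=\mathbf 1_B-\nu(B)$, the same check of (b) on the set $B=\{s>0\}$, and the same bound $|S|(H)=\int_H|s|\,d\nu\le\nu(H)$ for (c). The only addition is that you justify the total-variation identity via the explicit Hahn decomposition $(B,B^c)$, which the paper uses without comment.
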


\begin{proof}
Choose a measurable set $B\subseteq\mathcal X$ with $0<\nu(B)<1$ and define
\[
s(x):=\mathbf 1_B(x)-\nu(B),
\]
where $\mathbf 1_B(x)$ denotes the indicator of $B$, equal to $1$ on $B$ and $0$ outside $B$. We verify the properties in order. For (i),
\[
\int s\,d\nu
=
\int \mathbf 1_B\,d\nu-\nu(B)\int 1\,d\nu
=
\nu(B)-\nu(B)\cdot 1
=
0.
\]
Property (ii) holds because $\mathbf 1_B(x)\in\{0,1\}$, so $s(x)$ is either $1-\nu(B)$ or $-\nu(B)$, and both have absolute value at most one. Property (iii) holds because $0<\nu(B)<1$, so $s$ is positive on $B$ and negative on $B^c$, both of which have positive $\nu$-measure.

For $S$ as defined in Eq.~\eqref{eq:signed-measure-S}, property (a) is exactly (i) evaluated on $H=\mathcal X$. For property (b), consider the measurable set $H_+:=\{x:s(x)>0\}$. Because $s$ is not zero $\nu$-almost surely (by (iii)), either $\nu(H_+)>0$ or $\nu(\{x:s(x)<0\})>~0$. In the first case, $S(H_+)=\int_{H_+} s\,d\nu>0$; in the second case, taking $H_-:=\{x:s(x)<0\}$ gives $S(H_-)<0$. Hence $S\neq0$. 

Finally, for (c),
\[
|S|(H)=\int_H |s(x)|\,\nu(dx)\le \int_H 1\,\nu(dx)=\nu(H),
\]
where we use $\|s\|_\infty\le 1$ from (ii) and $|s(x)|\le \|s\|_\infty$.
\end{proof}

This proposition is the seed of the perturbation argument. The later kernel construction will add the perturbation $S$ to $Q_0$ pointwise at each intervention pair $(z_\rho,z_\sigma)$. For $Q_0$ to remain a probability kernel, the perturbation must change some events but contribute zero net mass overall; that is exactly what the zero-total-mass property of $S$ just proven in this proposition guarantees. We next look at the perturbation effect on the marginals.

\begin{proposition}[Perturbation family with fixed marginals]\label{prop:kernel-family-new}
The event-level form of Eq.~\eqref{eq:perturbation} is
\begin{equation}\label{eq:perturbation-H}
Q_t(H\mid z_\rho,z_\sigma)
:=
Q_0(H\mid z_\rho,z_\sigma)+t\,u(z_\rho)v(z_\sigma)S(H).
\end{equation}
Then for every
\[
|t|<\delta:=\frac{\varepsilon}{\|u\|_\infty\|v\|_\infty},
\]
$Q_t$ is a probability kernel and belongs to the kernel-level compatibility set $\mathcal Q_\tau(\tilde\chi^\rho,\tilde\chi^\sigma)$ of kernels compatible with the two private reports.
\end{proposition}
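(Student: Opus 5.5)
The plan is to verify, for each fixed $t$ with $|t|<\delta$, three properties of $Q_t$ in turn: it is a kernel (measurability and $\sigma$-additivity), each $Q_t(\cdot\mid z_\rho,z_\sigma)$ is a probability measure (total mass one and nonnegativity), and it satisfies the two report-projection constraints defining the compatibility set. Measurability in $(z_\rho,z_\sigma)$ for fixed $H$ is immediate, because $Q_0(H\mid\cdot)$ is measurable, $u$ and $v$ are measurable, and $S(H)$ is a constant. Countable additivity in $H$ follows because $Q_t(\cdot\mid z_\rho,z_\sigma)$ is a finite signed measure plus a constant multiple of the finite signed measure $S$.

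For normalisation, I will use property (a) of Proposition~\ref{prop:signed-perturbation-new}: $Q_t(\mathcal X\mid z_\rho,z_\sigma)=1+t\,u(z_\rho)v(z_\sigma)\cdot 0=1$. Nonnegativity is the only step that uses the size restriction on $t$, and I expect it to be the main obstacle. I would split into two cases.

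\emph{Case 1: $(z_\rho,z_\sigma)\notin A_\rho\times A_\sigma$.} Since $u$ and $v$ are supported on $A_\rho$ and $A_\sigma$, the product $u(z_\rho)v(z_\sigma)=0$, so $Q_t(\cdot\mid z_\rho,z_\sigma)=Q_0(\cdot\mid z_\rho,z_\sigma)$.

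\emph{Case 2: $(z_\rho,z_\sigma)\in A_\rho\times A_\sigma$.} Here I combine property (c), $|S(H)|\le|S|(H)\le\nu(H)$, with assumption~\ref{ass:A3}:
\[
Q_t(H\mid z_\rho,z_\sigma)\ \ge\ \varepsilon\,\nu(H)-|t|\,\|u\|_\infty\|v\|_\infty\,\nu(H)\ =\ \bigl(\varepsilon-|t|\,\|u\|_\infty\|v\|_\infty\bigr)\nu(H)\ \ge\ 0
\]
whenever $|t|<\delta$. Note that $\delta$ is finite and positive because $u$ and $v$ are bounded and nonzero. Since nonnegativity and total mass one together give $Q_t(H\mid\cdot)\le 1$, no separate upper bound is needed.

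For compatibility, I integrate Eq.~\eqref{eq:perturbation-H} over $P_{Z_\sigma}(dz_\sigma)$ with $z_\rho$ and $H$ held fixed. Boundedness of $u$, $v$ and $S$ justifies splitting the integral into two terms. The $Q_0$ term returns $P_\rho(H\mid z_\rho)$ by assumption~\ref{ass:A1}. The perturbation term equals $t\,u(z_\rho)S(H)\int v\,dP_{Z_\sigma}$, which vanishes because $v$ has mean zero. The symmetric computation over $P_{Z_\rho}$, using $\int u\,dP_{Z_\rho}=0$, recovers $P_\sigma$. Hence $Q_t\in\mathcal Q_\tau(\tilde\chi^\rho,\tilde\chi^\sigma)$ for every $|t|<\delta$.
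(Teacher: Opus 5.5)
Your proposal is correct and follows the paper's proof essentially step for step: normalisation from $S(\mathcal X)=0$, nonnegativity via the same two-case split using \ref{ass:A3} and $|S|(H)\le\nu(H)$, and compatibility by integrating out the centred factor. Your only additions are the explicit checks of measurability and $\sigma$-additivity, where the base measure in that sentence should be $Q_0$ rather than $Q_t$, and the remark that $Q_t(H\mid\cdot)\le 1$ is automatic; the paper leaves both implicit.
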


\begin{proof}
Fix $(z_\rho,z_\sigma)$. To show that $Q_t(\cdot\mid z_\rho,z_\sigma)$ is a probability measure, we check normalisation and nonnegativity.

For normalisation, evaluate the definition of $Q_t$ on the whole outcome space:
\[
Q_t(\mathcal X\mid z_\rho,z_\sigma)
=
Q_0(\mathcal X\mid z_\rho,z_\sigma)+t\,u(z_\rho)v(z_\sigma)S(\mathcal X).
\]
Because $Q_0$ is an admissible kernel, evaluating it on the whole outcome space gives $Q_0(\mathcal X\mid z_\rho,z_\sigma)=1$. Because Proposition~\ref{prop:signed-perturbation-new}(a) gives $S(\mathcal X)=0$, the perturbation contributes no net mass. Hence
\[
Q_t(\mathcal X\mid z_\rho,z_\sigma)=1.
\]

For nonnegativity, consider two cases. If 
$(z_\rho, z_\sigma) \notin A_\rho \times A_\sigma$, then by the 
support condition in \ref{ass:A2}, $u(z_\rho) v(z_\sigma) = 0$, so 
$Q_t(\cdot \mid z_\rho, z_\sigma) = Q_0(\cdot \mid z_\rho, z_\sigma)$ 
is a probability measure trivially. If 
$(z_\rho, z_\sigma) \in A_\rho \times A_\sigma$, the local 
minorisation condition \ref{ass:A3} applies through its lower-bound inequality. Using the elementary bounds 
$|u(z_\rho)| \leq \|u\|_\infty$, $|v(z_\sigma)| \leq \|v\|_\infty$, 
together with the defining total-variation inequality 
$S(H) \geq -|S|(H)$ for signed measures:
\begin{equation*}
Q_t(H \mid z_\rho, z_\sigma) 
\geq Q_0(H \mid z_\rho, z_\sigma) 
- |t| \, \|u\|_\infty \|v\|_\infty \, |S|(H),
\end{equation*}
for every measurable set $H\subseteq\mathcal X$. By 
assumption \ref{ass:A3}, $Q_0(H \mid z_\rho, z_\sigma) \geq \varepsilon \nu(H)$ 
on this rectangle, and by Proposition~\ref{prop:signed-perturbation-new}(c), 
$|S|(H) \leq \nu(H)$. Substituting these two bounds gives
\begin{equation*}
Q_t(H \mid z_\rho, z_\sigma) 
\geq \big(\varepsilon - |t| \, \|u\|_\infty \|v\|_\infty\big) \, \nu(H) 
\geq 0
\end{equation*}
whenever $|t| < \delta = \varepsilon / (\|u\|_\infty \|v\|_\infty)$, 
because then $\varepsilon - |t| \, \|u\|_\infty \|v\|_\infty > 0$ and 
$\nu(H) \geq 0$. So $Q_t(\cdot \mid z_\rho, z_\sigma)$ is a probability 
measure for every $(z_\rho, z_\sigma)$.

Integrating the perturbed kernel of Eq.~\eqref{eq:perturbation} over $P_{Z_\sigma}$ gives
\[
\begin{aligned}
&\int Q_t(\cdot\mid z_\rho,z_\sigma)\,P_{Z_\sigma}(dz_\sigma)\\
&\quad =
\int Q_0(\cdot\mid z_\rho,z_\sigma)\,P_{Z_\sigma}(dz_\sigma)\\
&\qquad +
t\,u(z_\rho)\Bigl(\int v(z_\sigma)\,P_{Z_\sigma}(dz_\sigma)\Bigr)S.
\end{aligned}
\]
The second term vanishes by the centring condition in \ref{ass:A2}, so $\pi^\rho(Q_t)=\pi^\rho(Q_0)=\tilde\chi^\rho$. The same calculation integrating over $P_{Z_\rho}$ gives $\pi^\sigma(Q_t)=\pi^\sigma(Q_0)=\tilde\chi^\sigma$. Hence $Q_t\in\mathcal Q_\tau(\tilde\chi^\rho,\tilde\chi^\sigma)$.
\end{proof}

This proposition proves non-uniqueness of the Markov kernel characterising the backbone. Proposition~\ref{prop:signed-perturbation-new} only gives a direction on the outcome space; here that direction becomes a whole interval of admissible backbone kernels that are observationally indistinguishable from the agents' point of view because the private report operators integrate one centred factor away. Finally, we show that these kernels are not interventionally consistent as required by RCK.

\begin{proposition}[Interventional non-equivalence]\label{prop:interventional-nonequivalence-new}
For $t\neq s$, there exists a pair $(z_\rho^\star,z_\sigma^\star)$ such that
\[
Q_t(\cdot\mid z_\rho^\star,z_\sigma^\star)\neq Q_s(\cdot\mid z_\rho^\star,z_\sigma^\star),
\]
and therefore the two admissible backbones induce different joint interventional distributions under the joint intervention
\(
\mathrm{do}(Z_\rho=z_\rho^\star,Z_\sigma=z_\sigma^\star).
\)
\end{proposition}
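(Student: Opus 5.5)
The plan is to compute the difference of the two perturbed kernels directly from the event-level definition in Eq.~\eqref{eq:perturbation-H}. For any pair $(z_\rho,z_\sigma)$ and any measurable $H$, the $Q_0$ terms cancel and we get
\[
Q_t(H\mid z_\rho,z_\sigma)-Q_s(H\mid z_\rho,z_\sigma)=(t-s)\,u(z_\rho)\,v(z_\sigma)\,S(H).
\]
This is a product of four factors. It therefore suffices to exhibit one pair $(z_\rho^\star,z_\sigma^\star)$ and one event $H^\star$ at which every factor is nonzero. The scalar $t-s$ is nonzero by hypothesis. For the outcome factor, Proposition~\ref{prop:signed-perturbation-new}(b) gives $S\neq0$. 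Its proof supplies a concrete event, for instance $H^\star=B$, on which $S(B)=\nu(B)(1-\nu(B))>0$.

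For the private-cause factors, assumption~\ref{ass:A2} says $u$ and $v$ are nonzero functions. I will read this as ``not identically zero'' and pick any $z_\rho^\star$ with $u(z_\rho^\star)\neq0$ and any $z_\sigma^\star$ with $v(z_\sigma^\star)\neq0$. Because $u,v$ are supported on $A_\rho,A_\sigma$, such points automatically lie in $A_\rho\times A_\sigma$. This is consistent with the region where Proposition~\ref{prop:kernel-family-new} guarantees the perturbed kernels are genuine probability kernels. Then $Q_t(H^\star\mid z^\star)-Q_s(H^\star\mid z^\star)\neq0$, so the two measures differ. By the definition of $Q$ as $P(X\in\cdot\mid \mathrm{do}(Z_\rho=z_\rho,Z_\sigma=z_\sigma))$, this is exactly a disagreement of joint interventional distributions.

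The only delicate step is the meaning of ``nonzero'' for $u$ and $v$. If it is meant only $P_{Z}$-almost everywhere, the pointwise choice above is still valid, since a function that is nonzero a.e.-somewhere is in particular nonzero at some point. The stronger reading also delivers the useful by-product that the disagreement holds on a positive-probability set of intervention pairs, namely $\{u\neq0\}\times\{v\neq0\}$, which has positive $P_{Z_\rho}\otimes P_{Z_\sigma}$ mass. I would state this as a remark, so that the non-equivalence cannot be dismissed as a null-set modification of the backbone. Distinctness of the family $\{Q_t\}$ claimed in Theorem~\ref{thm:main} then follows immediately, which completes the three-step proof.
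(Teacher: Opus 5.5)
Your proposal is correct and follows essentially the same route as the paper: cancel the $Q_0$ terms, use Assumption~\ref{ass:A2} to pick $z_\rho^\star\in A_\rho$ and $z_\sigma^\star\in A_\sigma$ with $u(z_\rho^\star)v(z_\sigma^\star)\neq0$, and use Proposition~\ref{prop:signed-perturbation-new}(b) to get an event with $S(H^\star)\neq0$, so that $(t-s)\,u(z_\rho^\star)v(z_\sigma^\star)S(H^\star)\neq0$. Your concrete choice $H^\star=B$ with $S(B)=\nu(B)(1-\nu(B))>0$ and your positive-mass remark about $\{u\neq0\}\times\{v\neq0\}$ are valid small additions; note that Proposition~\ref{prop:kernel-family-new} makes each $Q_t$ with $|t|<\delta$ a probability kernel at every intervention pair, not only on $A_\rho\times A_\sigma$.
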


\begin{proof}
By assumption \ref{ass:A2}, $u$ is not $P_{Z_\rho}$-almost surely zero and vanishes outside $A_\rho$, while $v$ is not $P_{Z_\sigma}$-almost surely zero and vanishes outside $A_\sigma$. Therefore 
there exist $z_\rho^\star \in A_\rho$ and $z_\sigma^\star \in A_\sigma$ 
such that $u(z_\rho^\star) \neq 0$ and $v(z_\sigma^\star) \neq 0$, and 
hence
\begin{equation*}
u(z_\rho^\star) v(z_\sigma^\star) \neq 0.
\end{equation*}
By Proposition~\ref{prop:signed-perturbation-new}(b), the signed measure $S$ is nonzero, so there exists a measurable set $H^\star\subseteq\mathcal X$ with $S(H^\star)\neq 0$. Then
\[
\begin{aligned}
&Q_t(H^\star\mid z_\rho^\star,z_\sigma^\star)
    -Q_s(H^\star\mid z_\rho^\star,z_\sigma^\star)\\
&\quad =(t-s)\,u(z_\rho^\star)v(z_\sigma^\star)S(H^\star)\neq 0.
\end{aligned}
\]
So $Q_t$ and $Q_s$ are distinct kernels, and they differ on the interventional distribution associated with the joint intervention $(z_\rho^\star,z_\sigma^\star)$.
\end{proof}

Given these components, we can now prove the core non-identifiability result of Theorem~\ref{thm:main}.

\begin{proof}[Proof of Theorem~\ref{thm:main}]
Proposition~\ref{prop:signed-perturbation-new} provides a nontrivial signed measure $S$ on the outcome space with total mass zero. This gives a direction in which one can perturb an admissible kernel without breaking the normalisation condition. Proposition~\ref{prop:kernel-family-new} shows that, after modulating this direction by the centred functions $u$ and $v$, one obtains a whole one-parameter family of admissible kernels 
\[\{Q_t:|t|<\delta\}\subseteq\mathcal Q_\tau(\tilde\chi^\rho,\tilde\chi^\sigma)\] 
that all reproduce the same observed private reports. Hence the private reports do not identify a unique backbone candidate. Proposition~\ref{prop:interventional-nonequivalence-new} then shows that distinct members of this family disagree on some joint intervention. Therefore the ambiguity is causally meaningful rather than merely representational. This proves, under assumptions \ref{ass:A1}--\ref{ass:A3}, that a generic backbone Markov kernel is not identifiable when private reports collide on a shared outcome.
\end{proof}

\subsection{RCK Interpretation of Theorem~\ref{thm:main}}\label{app:rck-proof}
\begin{proposition}\label{prop:rck-obstruction-main}
In the private-cause kernel representation of the edge stalk, the private reports do not identify a unique common edge value on the RCK edge stalk: under Assumptions~\ref{ass:A1}--\ref{ass:A3}, $|\mathcal B_\tau(\tilde\chi^\rho,\tilde\chi^\sigma)|>1$.
\end{proposition}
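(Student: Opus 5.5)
The plan is to translate the kernel-level conclusion of Theorem~\ref{thm:main} into a statement about the edge stalk, using the identification, fixed in the main text, of the edge value $\chi^\tau$ with the joint-intervention kernel $Q$. Concretely, in the private-cause kernel representation, an element of $CK(M^\tau)$ is the kernel $Q(\cdot\mid z_\rho,z_\sigma)=P(X\in\cdot\mid \mathrm{do}(Z_\rho=z_\rho,Z_\sigma=z_\sigma))$. The projections $\pi^\rho,\pi^\sigma$ appearing in Eq.~\eqref{eq:compat-main} are exactly the local report operators of Eq.~\eqref{eq:private-reports}. Under this dictionary the set $\mathcal B_\tau(\tilde\chi^\rho,\tilde\chi^\sigma)$ is precisely the kernel compatibility set $\mathcal Q_\tau(\tilde\chi^\rho,\tilde\chi^\sigma)$ used in Proposition~\ref{prop:kernel-family-new}. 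I would state this correspondence explicitly as the first step. It is essentially definitional, but it is what licenses moving between the two languages.

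Second, I invoke Proposition~\ref{prop:kernel-family-new} under Assumptions~\ref{ass:A1}--\ref{ass:A3}. This gives the family $\{Q_t:|t|<\delta\}\subseteq\mathcal Q_\tau(\tilde\chi^\rho,\tilde\chi^\sigma)$ with $\delta=\varepsilon/(\|u\|_\infty\|v\|_\infty)>0$. Each $Q_t$ is a genuine probability kernel with $\pi^\rho(Q_t)=\tilde\chi^\rho$ and $\pi^\sigma(Q_t)=\tilde\chi^\sigma$. I then pick two parameters, say $t=0$ and $s=\delta/2$.

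Third, I apply Proposition~\ref{prop:interventional-nonequivalence-new}. It supplies a pair $(z_\rho^\star,z_\sigma^\star)$ and an event $H^\star$ with $Q_0(H^\star\mid z_\rho^\star,z_\sigma^\star)\neq Q_{\delta/2}(H^\star\mid z_\rho^\star,z_\sigma^\star)$. So these are two distinct elements of the compatibility set, and hence $|\mathcal B_\tau|\ge 2>1$. In fact the map $t\mapsto Q_t$ is injective, so the set contains a continuum of elements.

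The only real obstacle is the first step: making sure that distinct kernels correspond to distinct edge values, rather than being quotiented together in $CK(M^\tau)$. If the stalk were a coarser object, two kernels might represent the same $\chi^\tau$. I would resolve this by appealing to interventional consistency. The edge stalk must record, at minimum, the joint interventional distributions over $\mathcal X$, because that is what the restriction maps $\alpha_{\mathcal X}^{\cdot\triangleleft\tau}$ target. Two kernels that differ on a joint intervention therefore define different causal-knowledge values. Proposition~\ref{prop:interventional-nonequivalence-new} exhibits exactly such a difference, which closes the argument.
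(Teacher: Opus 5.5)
Your proposal is correct and follows essentially the same route as the paper: it combines Propositions~\ref{prop:kernel-family-new} and~\ref{prop:interventional-nonequivalence-new} to obtain two compatible kernels that differ on a joint intervention, and then reads them, in the private-cause kernel representation, as distinct elements of $\mathcal B_\tau(\tilde\chi^\rho,\tilde\chi^\sigma)$. Your discussion of why these kernels are not identified with each other in $CK(M^\tau)$ spells out a step the paper states in one line (``because $Q_t$ and $Q_s$ differ on some joint intervention, the corresponding edge-level candidates are distinct'').
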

\begin{proof}
By Propositions~\ref{prop:kernel-family-new} and~\ref{prop:interventional-nonequivalence-new}, there exist distinct kernels $Q_t,Q_s\in\mathcal Q_\tau(\tilde\chi^\rho,\tilde\chi^\sigma)$ with $t\neq s$ that reproduce the same private reports $\tilde\chi^\rho$ and $\tilde\chi^\sigma$ and disagree on some joint intervention. In the private-cause representation, these kernels are representatives of distinct edge-level candidates $\chi_t^\tau,\chi_s^\tau\in CK(M^\tau)$. Since membership in $\mathcal Q_\tau(\tilde\chi^\rho,\tilde\chi^\sigma)$ means that each kernel projects to $\tilde\chi^\rho$ under $\pi^\rho$ and to $\tilde\chi^\sigma$ under $\pi^\sigma$, both edge-level candidates lie in
\(
\mathcal B_\tau(\tilde\chi^\rho,\tilde\chi^\sigma).
\)
Because $Q_t$ and $Q_s$ differ on some joint intervention, the corresponding edge-level candidates are distinct. Hence
\[
\left|\mathcal B_\tau(\tilde\chi^\rho,\tilde\chi^\sigma)\right|>1.
\]
\end{proof}

\subsection{Proof of Proposition~\ref{prop:separable}}\label{app:separable-proof}

\begin{proposition*}[Proposition~\ref{prop:separable}, restated]
If the shared outcome $X$ satisfies Eq.~\eqref{eq:separable} and $\E[\varepsilon]<\infty$, then the joint interventional mean response is
\[
\begin{aligned}
m(z_\rho,z_\sigma)
&:=
\E[X\mid \mathrm{do}(Z_\rho=z_\rho,Z_\sigma=z_\sigma)]\\
&= 
f_\rho(z_\rho)+f_\sigma(z_\sigma)+\E[\varepsilon].
\end{aligned}
\]
In particular, the response contains no mixed term in $(z_\rho,z_\sigma)$.
\end{proposition*}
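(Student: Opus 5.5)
The plan is to read Eq.~\eqref{eq:separable} as the structural assignment for $X$ in an SCM whose exogenous disturbance $\varepsilon$ does not depend on $(Z_\rho,Z_\sigma)$. We then compute the joint interventional mean directly from the intervened model. Under $\mathrm{do}(Z_\rho=z_\rho,Z_\sigma=z_\sigma)$, the assignments for $Z_\rho$ and $Z_\sigma$ are replaced by constants, while the assignment for $X$ is left unchanged. So in the intervened model
\[
X = f_\rho(z_\rho)+f_\sigma(z_\sigma)+\varepsilon
\]
holds pointwise. We also need that the law of $\varepsilon$ is unaffected by the intervention. This holds because $\varepsilon$ is exogenous: interventions on endogenous variables do not alter $P_U$, as in the SCM background of Appendix~\ref{app:rck-background}.

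Next we take expectations. The terms $f_\rho(z_\rho)$ and $f_\sigma(z_\sigma)$ are deterministic constants for the fixed intervention values, so they are finite real numbers. We read the hypothesis $\E[\varepsilon]<\infty$ as integrability, $\E|\varepsilon|<\infty$. This is the one point where care is needed: the stated hypothesis should be interpreted as finiteness of the mean so that linearity of expectation applies. Linearity then gives
\[
m(z_\rho,z_\sigma)=f_\rho(z_\rho)+f_\sigma(z_\sigma)+\E[\varepsilon],
\]
where $\E[\varepsilon]$ is taken under the exogenous law, which is the same for every intervention pair.

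Finally, for the claim that the response contains no mixed term, we make this precise by showing that the mixed difference vanishes. For any $z_\rho,z_\rho'$ and $z_\sigma,z_\sigma'$,
\[
m(z_\rho,z_\sigma)-m(z_\rho',z_\sigma)-m(z_\rho,z_\sigma')+m(z_\rho',z_\sigma')=0.
\]
This follows immediately from the additive form, since every summand cancels. Equivalently, $m$ lies in the sum of the function spaces of $z_\rho$ alone and of $z_\sigma$ alone. In particular, $m$ has no component along a product direction $u(z_\rho)v(z_\sigma)$ of the kind used in Eq.~\eqref{eq:perturbation}.

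The main obstacle is not computational. It lies in stating the modelling conventions explicitly: that $\varepsilon$ is exogenous and invariant under joint interventions, and that it is integrable. Once these are fixed, the argument is a two-line substitution followed by linearity of expectation.
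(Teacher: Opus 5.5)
Your proposal is correct and follows the paper's proof: you substitute the joint intervention into the structural equation and take expectations. Beyond that, you spell out three things the paper leaves implicit or only gestures at: the exogeneity of $\varepsilon$ (so its law is unchanged by the intervention), the integrability reading of $\E[\varepsilon]<\infty$, and a precise meaning for ``no mixed term'' via the vanishing interaction contrast, where the paper says only that the additive surface ``leaves no room'' for the product direction $u(z_\rho)v(z_\sigma)$.
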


\begin{proof}
Under the intervention $\mathrm{do}(Z_\rho=z_\rho,Z_\sigma=z_\sigma)$, the structural equation becomes
\[
X=f_\rho(z_\rho)+f_\sigma(z_\sigma)+\varepsilon.
\]
Taking expectations yields the interventional mean response in Proposition~\ref{prop:separable}. The perturbation family in Eq.~\eqref{eq:perturbation} varies admissible backbones through the jointly varying factor $u(z_\rho)v(z_\sigma)$ while preserving both one-cause marginals. An additive response surface leaves no room for such a mixed term, so that perturbation direction is excluded at the level of the induced backbone response surface.
\end{proof}
\vspace{-0.4cm}
\subsection{Observational summaries do not identify the additive response}\label{app:obs-proof}
\vspace{-0.2cm}
\begin{proposition*}[Observational residual summaries are insufficient]
Even under the additive model in Eq.~\eqref{eq:separable}, communicated observational residual summaries such as residual variances need not identify the additive backbone mean response $m(z_\rho,z_\sigma)$.
\end{proposition*}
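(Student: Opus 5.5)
The plan is to prove the statement by an explicit counterexample: two additive models of the form Eq.~\eqref{eq:separable} that produce identical communicated observational residual summaries but different mean response surfaces $m$. Since the claim is only that such summaries \emph{need not} identify $m$, one such pair suffices.

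I will work in a linear Gaussian instance. Take $Z_\rho, Z_\sigma$ independent with unit variance and $\varepsilon\indep(Z_\rho,Z_\sigma)$ with mean zero. Let $f_\rho(z)=a z$ and $f_\sigma(z)=b z$, so the residuals are $R_\rho=bZ_\sigma+\varepsilon$ and $R_\sigma=aZ_\rho+\varepsilon$. The summaries communicated are the residual variances
\[
\Var(R_\rho)=b^2+\sigma_\varepsilon^2,\qquad \Var(R_\sigma)=a^2+\sigma_\varepsilon^2,
\]
and, to be generous, the total variance $\Var(X)=a^2+b^2+\sigma_\varepsilon^2$ and the residual means, which are all zero.

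\emph{Step 1: matching summaries.} Choose a second model $(a',b',\sigma_\varepsilon'^2)$ with $a'^2=a^2$, $b'^2=b^2$ and $\sigma'_\varepsilon=\sigma_\varepsilon$, but $b'=-b$. For instance, set $a=a'=1$, $b=1$, $b'=-1$. All listed second-moment summaries then coincide.

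\emph{Step 2: different responses.} The two models give $m(z_\rho,z_\sigma)=z_\rho+z_\sigma$ and $m'(z_\rho,z_\sigma)=z_\rho-z_\sigma$, which differ at $(0,1)$. By Proposition~\ref{prop:separable}, both $m$ and $m'$ are additive backbones, so the summaries fail to identify $m$.

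\emph{Step 3: the residual-variance-only case.} If only $\Var(R_\rho)$ and $\Var(R_\sigma)$ are exchanged, a sign-free witness is also available. Take $(a,b,\sigma_\varepsilon^2)=(1,1,1)$ and $(a',b',\sigma_\varepsilon'^2)=(\sqrt{2},\sqrt{2},0)$. Both give residual variances $2$ and $2$, which shows that variation in the other agent's channel cannot be separated from variation in $\varepsilon$. The choice $\sigma'_\varepsilon=0$ is degenerate, so I would use $(\sqrt{1.5},\sqrt{1.5},0.5)$ instead, which gives $1.5+0.5=2$ for each residual variance.

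The main obstacle is scoping: deciding what counts as an ``observational residual summary''. If the full joint law of $(R_\rho,Z_\rho)$ were exchanged, the regression $\E[R_\rho\mid Z_\sigma]$ would not be available, since $Z_\sigma$ is private to the other agent. Even so, the full distribution of $R_\rho$ is matched by the sign flip when $Z_\sigma$ is symmetric. I would therefore state the sign-flip construction with symmetric Gaussian $Z$, so that the complete laws of $R_\rho$ and $R_\sigma$ coincide, and not just their variances. That covers any summary that is a functional of those laws.
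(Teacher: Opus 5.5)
Your proof is correct, but your witness differs from the paper's. The paper fixes $f_\sigma(z)=z$ and the noise variance $\eta^2$, and swaps the $\rho$-channel between $z$ and the variance-matched nonlinear function $(z^2-1)/\sqrt2$. The residual variances, and also $\Var(X)$, then coincide while the \emph{shape} of $m$ changes. You stay inside the linear class and either flip the sign of one slope (Step~1) or trade channel variance against noise variance (Step~3).

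\emph{What your sign flip buys.} Take $Z_\sigma$ symmetric and independent of $(Z_\rho,\varepsilon)$. Then the joint law of $(R_\rho,R_\sigma)$, and even the law of $X$, are identical across your two models. So no functional of these laws can separate them, and knowing that the responses are linear does not help. The paper's pair matches only second moments. There $R_\sigma$ is Gaussian in one model and contains a skewed, centred $\chi^2$ component in the other, so a third moment would already tell them apart.

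\emph{What the paper's witness buys.} It shows that matched variances leave the functional form of a channel free, not merely a sign. It also needs no symmetry assumption.

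\emph{On Step~3.} This is the construction that directly realises the main-text remark that a residual variance cannot separate the other agent's channel from $\varepsilon$. As your Step~1 anticipates, it breaks once $\Var(X)$ is also shared. At that point $a^2$, $b^2$ and $\sigma_\varepsilon^2$ are determined and only the signs remain free.

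\emph{Minor point.} State $\E[Z_\rho]=\E[Z_\sigma]=0$ explicitly, since you use it for the zero residual means.
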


\begin{proof}
It suffices to give a counterexample. Let
\[
Z_\rho,Z_\sigma\stackrel{\mathrm{iid}}{\sim}\mathcal N(0,1),
\qquad
f_\sigma(z)=z,
\qquad
\varepsilon\sim \mathcal N(0,\eta^2),
\]
with $Z_\rho$, $Z_\sigma$, and $\varepsilon$ mutually independent. Compare the two additive models that share the same nontrivial $\sigma$-channel $f_\sigma$ but differ in the $\rho$-channel:
\[
f_\rho^{(1)}(z)=z,
\qquad
f_\rho^{(2)}(z)=\frac{z^2-1}{\sqrt 2}.
\]
Both satisfy
\(
\Var\bigl(f_\rho^{(1)}(Z_\rho)\bigr)
=
\Var\bigl(f_\rho^{(2)}(Z_\rho)\bigr)
=
1.
\)

In both models,
\[
\begin{aligned}
R_\rho&=X-f_\rho(Z_\rho)
    =f_\sigma(Z_\sigma)+\varepsilon=Z_\sigma+\varepsilon,\\
\Var(R_\rho)&=1+\eta^2,
\end{aligned}
\]
and
\[
\begin{aligned}
R_\sigma&=X-f_\sigma(Z_\sigma)
=f_\rho(Z_\rho)+\varepsilon,
\\
\Var(R_\sigma)&=1+\eta^2.
\end{aligned}
\]
So the observational residual summaries coincide, but the additive backbone response surfaces are different:
\[
\begin{aligned}
m_1(z_\rho,z_\sigma)&=z_\rho+z_\sigma,\\
m_2(z_\rho,z_\sigma)&=\frac{z_\rho^2-1}{\sqrt 2}+z_\sigma.
\end{aligned}
\]
Hence, observational residual summaries do not identify $m$ in general.
\end{proof}
\vspace{-0.6cm}
\section{Illustrative Witnesses of Theorem~\ref{thm:main}}\label{app:witnesses}
\vspace{-0.2cm}
This appendix records two low-dimensional witnesses and one continuous non-Gaussian witness to illustrate the general non-identifiability result in Theorem~\ref{thm:main}. They are concrete illustrations of the same missing interaction direction that the kernel-level proof formalises.
They are not claimed as standalone new marginal-pathology examples: the binary case echoes familiar contingency-table and marginal-compatibility phenomena \citep{agresti2013categorical,nelsen2006copulas}, while the Gaussian case echoes linear-Gaussian non-identifiability phenomena \citep{peters2017elements}. Their role is to show how the same missing interaction direction appears inside the RCK backbone-identification problem, where the fixed objects are communicated one-cause causal marginals rather than a single observational law.
\vspace{-0.2cm}
\subsection[Binary 2x2 witness]{The binary $2\times 2$ witness}\label{app:witness-binary}
\vspace{-0.2cm}
Let $p_{ij}:=P(X=1\mid Z_\rho=i,Z_\sigma=j)$ for $i,j\in\{0,1\}$, and suppose the unobserved private cause is averaged with weight $\frac12$. Write
\[
a_i:=\tilde\chi^\rho(\{1\}\mid i),
\qquad
b_j:=\tilde\chi^\sigma(\{1\}\mid j).
\]
Then the reported one-cause kernels impose the linear constraints
\[
\frac12 p_{00}+\frac12 p_{01}=a_0,\qquad
\frac12 p_{10}+\frac12 p_{11}=a_1,
\]
\[
\frac12 p_{00}+\frac12 p_{10}=b_0,\qquad
\frac12 p_{01}+\frac12 p_{11}=b_1.
\]
This system has rank three, not four, because the sum of the first two equations equals the sum of the second two. Its null space is spanned by
\(
(1,-1,-1,1).\\[-0.2cm]
\)

So if $p_0=(p_{00}^{(0)},p_{01}^{(0)},p_{10}^{(0)},p_{11}^{(0)})$ is one admissible table, then every table of the form
\[
p(t)=p_0+t(1,-1,-1,1)
\]
has the same row and column averages for all sufficiently small $|t|$. The unresolved coordinate is the interaction term.
\vspace{-0.2cm}
\subsection{The Gaussian covariance witness}\label{app:witness-gaussian}
\vspace{-0.2cm}
For a moment-based witness, let the shared outcome be vector-valued, $X=(X_1,X_2)$, and let $(Z_\rho,Z_\sigma)\in\{0,1\}^2$. In each joint intervention cell, define a Gaussian response
\vspace{-0.1cm}
\[
X\mid \mathrm{do}(Z_\rho=z_\rho,Z_\sigma=z_\sigma)
\sim
\mathcal N\!\bigl(0,\Sigma_t(z_\rho,z_\sigma)\bigr),\\[-0.1cm]
\]
with covariance matrix
\vspace{-0.1cm}
\[
\Sigma_t(z_\rho,z_\sigma)
:=
\begin{pmatrix}
1 & t(-1)^{z_\rho+z_\sigma}\\
t(-1)^{z_\rho+z_\sigma} & 1
\end{pmatrix},
\qquad |t|<1.
\]
The cell-wise cross-covariance is therefore
\[
\begin{aligned}
\sigma^{(t)}_{12}(z_\rho,z_\sigma)
&:=
\Cov_t\!\left(X_1,X_2\mid
    \mathrm{do}(Z_\rho=z_\rho,Z_\sigma=z_\sigma)\right)\\
&=
t(-1)^{z_\rho+z_\sigma}.
\end{aligned}
\]
If an agent averages over the other private cause with equal weights, then the reported one-cause conditional distribution is a mixture of two Gaussians: $\frac12\mathcal{N}(0,\Sigma_t(z_\rho,0)) + \frac12\mathcal{N}(0,\Sigma_t(z_\rho,1))$, not itself Gaussian. However, the covariance matrix of this mixture is the equal-weighted average of the cell-wise covariances, so the reported row and column second-moment summaries are
\[
\begin{aligned}
\frac12 \sigma^{(t)}_{12}(z_\rho,0)
    +\frac12 \sigma^{(t)}_{12}(z_\rho,1)&=0,\\
\frac12 \sigma^{(t)}_{12}(0,z_\sigma)
    +\frac12 \sigma^{(t)}_{12}(1,z_\sigma)&=0.
\end{aligned}
\]
These summaries vanish because the cross-covariance entries $\sigma^{(t)}_{12}(z_\rho,z_\sigma)=t(-1)^{z_\rho+z_\sigma}$ alternate in sign across the intervention cells: for fixed $z_\rho$, the values $t(-1)^{z_\rho}$ and $t(-1)^{z_\rho+1}$ are negatives of each other, so they cancel under equal weighting.
Hence, every value of $t$ induces the same \emph{communicated} cross-covariance summaries, even though the joint interventional laws differ cell by cell. In particular,
\vspace{-0.2cm}
\[
\begin{aligned}
t=0&\implies X_1\indep X_2,\\
t\neq 0&\implies X_1\not\!\indep X_2,
\end{aligned}
\]
within each intervention cell. Here, by ``communicated cross-covariance summaries'' we mean the second-moment level: the mixture distributions themselves differ from cell to cell, but their marginal covariance matrices remain identical across all values of $t$. This witness shows that even a single low-dimensional functional of the joint intervention law can remain free under the same local summaries.
\vspace{-0.2cm}
\subsection{A continuous, non-Gaussian witness}\label{sec:continuous-witness}
\vspace{-0.2cm}
The witnesses in Sections~\ref{app:witness-binary} and 
\ref{app:witness-gaussian} are finite-dimensional. The kernel-level 
statement of Theorem~\ref{thm:main} applies more broadly, and the 
construction in Proposition~\ref{prop:signed-perturbation-new} produces 
an explicit perturbation on any nontrivial measurable outcome space. 
We illustrate the Theorem's applicability with a continuous outcome and non-Gaussian base kernel.

Let $\mathcal{X} = \mathbb{R}$ with Borel $\sigma$-algebra, and let 
$Z_\rho, Z_\sigma$ each take values in $[-1, 1]$, with symmetric marginal 
laws $P_{Z_\rho}, P_{Z_\sigma}$ admitting densities bounded away from zero. Take the base kernel
\vspace{-0.2cm}
\begin{equation*}
Q_0(\cdot \mid z_\rho, z_\sigma) 
= \mathrm{Laplace}\big(\mu(z_\rho, z_\sigma),\, b\big),\\[-0.1cm]
\end{equation*}
where $\mu(z_\rho, z_\sigma) = z_\rho + z_\sigma$ is the additive mean 
and $b > 0$ is fixed. The Laplace base satisfies the local overlap condition in Assumption~\ref{ass:A3}: on the intervention rectangle $A_\rho\times A_\sigma=[-1,1]^2$, the mean $\mu(z_\rho,z_\sigma)$ remains in $[-2,2]$, so the Laplace density is bounded below on any bounded outcome interval. Choose $\nu$ as the uniform law on $[-M,M]$ for some $M>1$ and let $c>0$ be the corresponding lower bound, so that
\vspace{-0.1cm}
\[
Q_0(H\mid z_\rho,z_\sigma)\ge c\,\nu(H)\\[-0.1cm]
\]
for all $(z_\rho,z_\sigma)\in[-1,1]^2$ and measurable $H\subseteq\mathcal X$.

Choose the perturbation seed 
$s(x) = \mathbf{1}_{[0,1]}(x) - \mathbf{1}_{[-1,0]}(x)$, which is 
bounded, $\nu$-centred, and not $\nu$-almost-surely zero. The signed 
measure $S(H) = \int_H s \, d\nu$, in the form of Eq.~\eqref{eq:signed-measure-S}, has $S(\mathcal{X}) = 0$ as in Proposition~\ref{prop:signed-perturbation-new}(a) and is 
supported on $[-1, 1]$. With $u(z_\rho) = z_\rho$ 
and $v(z_\sigma) = z_\sigma$, the symmetry of the marginal laws verifies the centring part of Assumption~\ref{ass:A2}. The perturbation family, i.e. Eq.~\eqref{eq:perturbation-H} in this witness,
\begin{equation*}
Q_t(\cdot \mid z_\rho, z_\sigma) 
= Q_0(\cdot \mid z_\rho, z_\sigma) 
+ t \, z_\rho \, z_\sigma \, S(\cdot)
\end{equation*}
consists of valid probability kernels for all $|t| < c$. Each 
$Q_t$ produces the same single-cause marginal kernels as $Q_0$, but 
the joint interventional distribution at 
$(z_\rho^\star, z_\sigma^\star) = (1, 1)$ shifts mass from the negative 
half-line to the positive half-line in proportion to $t$. The moments 
of the single-cause reports coincide across the family, while the 
joint interventional law differs at any intervention pair with 
$z_\rho \cdot z_\sigma \neq 0$.

This witness illustrates that the obstruction is not a 
finite-dimensional or Gaussian artefact: it survives on continuous 
outcomes, with non-Gaussian base kernels, and extends to heavy-tailed 
alternatives obtained by replacing the Laplace base with any kernel 
admitting a local overlap bound on a positive-measure rectangle.

\section{Additive special cases under causal-response communication}\label{app:positive-special-cases}

Throughout this appendix section, assume the additive-separable model in Eq.~\eqref{eq:separable}. The agents communicate causally identified response functions $f_\rho$ and $f_\sigma$, as required in Section~3 of the main text. With disturbance law $P_\varepsilon$, the corresponding additive edge kernel is
\begin{equation}\label{eq:additive_backbone_kernel}
Q(H\mid z_\rho,z_\sigma)
:=
P_\varepsilon\!\left(\{e:m(z_\rho,z_\sigma)+e\in H\}\right).
\end{equation}
Thus communicating $f_\rho$ and $f_\sigma$ identifies the intervention-dependent part $m$; the full kernel is fixed once $P_\varepsilon$ is fixed, identified, or communicated. 

We record two special cases to illustrate the positive identifiability consequences of communicating the additive response functions.
The first special case records when communicated slopes are enough to identify the additive response surface; the second records when the disturbance variance is also recoverable from an observed marginal variance under additional independence assumptions.
\vspace{-0.2cm}
\paragraph{Linear additive responses.}
Suppose further that
\[
f_\rho(z_\rho)=\alpha z_\rho,
\qquad
f_\sigma(z_\sigma)=\beta z_\sigma.
\]
Then the identified slopes $(\alpha,\beta)$ are sufficient to identify
\[
m(z_\rho,z_\sigma)=\alpha z_\rho+\beta z_\sigma.
\]
Within the linear additive class, they are also necessary because they fully determine the response functions; communicating those functions then fixes the additive backbone response surface above.
\vspace{-0.2cm}
\paragraph{Residual variance under independence.}
Suppose in addition that $Z_\rho\indep Z_\sigma$, $\varepsilon\indep(Z_\rho,Z_\sigma)$, and the marginal variance $\Var(X)$ is observed. Then
\[
\Var(\varepsilon)
=
\Var(X)-\Var\!\bigl(f_\rho(Z_\rho)\bigr)-\Var\!\bigl(f_\sigma(Z_\sigma)\bigr).
\]
Because $Z_\rho\indep Z_\sigma$ and $\varepsilon\indep(Z_\rho,Z_\sigma)$, the three terms $f_\rho(Z_\rho)$, $f_\sigma(Z_\sigma)$, and $\varepsilon$ are pairwise uncorrelated. Therefore
\[
\Var(X)
=
\Var\!\bigl(f_\rho(Z_\rho)\bigr)+\Var\!\bigl(f_\sigma(Z_\sigma)\bigr)+\Var(\varepsilon),
\]
identifying the disturbance variance under these additional moment and independence assumptions.

\end{document}